\tikzstyle{every node}=[circle, draw, fill=black!50, inner sep=0pt, minimum width=4pt]
\tikzstyle{rouge}=[circle, draw, fill=red, inner sep=0pt, minimum width=6pt]
\tikzstyle{bleu}=[circle, draw, fill=blue, inner sep=0pt, minimum width=6pt]
\tikzstyle{petitrouge}=[circle, draw, fill=red, inner sep=0pt, minimum width=4pt]
\tikzstyle{petitbleu}=[circle, draw, fill=blue, inner sep=0pt, minimum width=4pt]
\tikzstyle{txt}=[draw=none, fill=none]
\tikzstyle{white}=[circle, draw, fill=black!0, inner sep=0pt, minimum width=4pt]
\definecolor{ffqqqq}{rgb}{1,0,0}
\definecolor{qqzzqq}{rgb}{0,0.6,0}
\definecolor{qqqqff}{rgb}{0,0,1}
\definecolor{uuuuuu}{rgb}{0.27,0.27,0.27}
\newcommand{\p}{\mathbf{p}}
\newcommand{\q}{\mathbf{q}}
\newcommand{\x}{\mathbf{x}}
\newcommand{\y}{\mathbf{y}}
\def\L{\mathcal{L}}
\def\Feas{\mathrm{Feas}}
\def\range{\mathrm{range}}
\def\gap{\mathrm{gap}}
\def\CHILDREN{\mathrm{CHILDREN}}
\def\PARENT{\mathrm{PARENT}}
\def\ROOT{\mathrm{ROOT}}
\newtheorem{thm}{Theorem}[section]
\newtheorem{lem}[thm]{Lemma}
\newtheorem{prop}[thm]{Proposition}
\theoremstyle{definition}
\newtheorem{defn}{Definition}
\title{Cooperative and Stochastic Multi-Player Multi-Armed Bandit:\\
Optimal Regret With Neither Communication Nor Collisions}
\author{S\'ebastien Bubeck \\
Microsoft Research
\and Thomas Budzinski \\
UBC
\and 
Mark Sellke \\
Stanford University}
\definecolor{darkgreen}{rgb}{0.0, 0.5, 0.0}
\begin{document}

\maketitle

\abstract{We consider the cooperative multi-player version of the stochastic multi-armed bandit problem. We study the regime where the players cannot communicate but have access to shared randomness. In prior work by the first two authors, a strategy for this regime was constructed for two players and three arms, with regret $\tilde{O}(\sqrt{T})$, and with no collisions at all between the players (with very high probability). In this paper we show that these properties (near-optimal regret and no collisions at all) are achievable for any number of players and arms. At a high level, the previous strategy heavily relied on a $2$-dimensional geometric intuition that was difficult to generalize in higher dimensions, while here we take a more combinatorial route to build the new strategy.}

\section{Introduction} \label{sec:intro}
We consider the cooperative multi-player version of the classical stochastic multi-armed bandit problem. We denote by $m$ the number of players and by $K\geq m$ the number of arms. The bandit instance is described by the mean rewards $\p = \left( p(1), \hdots, p(K) \right) \in [0,1]^K$, which is unknown to the players. Denote $(Y_t(i))_{1 \leq i \leq K, 1 \leq t \leq T}$ for a sequence of independent random variables such that $\mathbb P(Y_t(i) = 1) = p(i)$ and $\mathbb P(Y_t(i) = 0) = 1-p(i)$. At each time step $t=1, \hdots, T$, each player $X \in [m]$ chooses an action $i_t^X \in [K]$, and observes the corresponding reward $Y_t(i_t^X)$. We define the regret by:
\[
R_T = T \cdot \max_{\mathbf{a} \in \{0,1\}^K : \sum_{i=1}^K a(i) = m} \mathbf{a} \cdot \p - \sum_{t=1}^T \sum_{X = 1}^m p(i_t^X).
\]
We assume that once the game has started the players cannot communicate at all (but they can agree on a strategy prior to the game starting).
Usually in these cooperative multi-player bandit problems the players are also penalized for {\em collisions}, i.e. time steps $t$ such that $i_t^X = i_t^Y$ for some distinct players $X \neq Y$. Here instead, following the prior work \cite{BB20}, we consider a seemingly daunting constraint on the players that subsumes such penalty: we ask that with high probability (with respect to the reward generation process) they simply {\em do not collide at all}. Our main result reads as follows:
\begin{thm} \label{thm:bandit}
There exists a randomized strategy for the players based on shared randomness\footnote{The shared randomness assumption only requires the players to have access to a public source of randomness (e.g., unpredictable weather patterns). Indeed, the ``adversary" that selects the bandit instance $\p$ can have access to this source too, but the point is that once $\p$ is chosen it is fixed for the rest of the game, and so the choice cannot depend on future outcomes of the randomness source.}, such that for any $\p \in [0,1]^K$, we have the regret bound
\[
\mathbb E[R_T] \leq O \left( m K^{11/2} \sqrt{T \log(T)} \right) ,
\]
and furthermore with probability at least $1-\frac{1}{T}$ (with respect to the reward generation process) the players never collide.
\end{thm}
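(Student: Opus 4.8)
The plan is to wrap the players' strategy around a \emph{clock-driven epoch structure} together with a \emph{combinatorial state} that every player can maintain in sync without communication, and then to derive both halves of the theorem from a single union bound over concentration events. First I would fix the epochs: partition $\{1,\dots,T\}$ into phases of geometrically increasing length, so that the phase index at time $t$ depends on $t$ only and is thus common knowledge. Inside each phase the players run a collision-free exploration schedule: using the shared randomness they fix in advance, for every time step, an injective map from the players not currently \emph{committed} to an arm into the set of currently \emph{active} arms, laid out as a round-robin / Latin-square type schedule, so that by the end of phase $k$ each player has pulled each active arm $\Theta(2^k m/K)$ times. Since this schedule is a deterministic function of the shared randomness and of the common state, it is collision-free no matter what the rewards do.

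Next comes the decision logic. Each player keeps, for every ordered pair of arms $(i,j)$, its own empirical gap together with the common confidence radius $r_k(i)$, which depends only on the scheduled (hence common) sample count and on $T$. The crucial design choice is that a player may change its \emph{role} in the global assignment — declare an arm ``confirmed in the top $m$'' and settle on it, or declare an arm ``eliminated'' — only when its data exhibit an \emph{inflated} margin: a margin so large that, on the concentration event $\{|\hat p^X(i)-p(i)|\le r_k(i)\text{ for all }X,i,k\}$, \emph{every} player's empirical gap also crosses the corresponding un-inflated threshold. This guarantees both that no two players ever reach contradictory conclusions and that whenever one player can safely deviate from a ``default'' behavior, every player's certified knowledge also licenses that deviation. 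The combinatorial component then records the set of currently certified facts as a single data structure — a ``skeleton'' of confirmed/eliminated arms and pending comparisons — from which a \emph{canonical injective assignment} of players to arms is read off; one must prove that as certified facts accumulate monotonically the canonical assignment changes only through moves that keep it injective at every intermediate step, with players falling back to a shared-randomness-fixed assignment whenever certified knowledge is too weak to deviate.

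The two conclusions then follow. For the no-collision claim: outside the bad event, which has probability at most $1/T$ by a union bound over $O(\mathrm{poly}(K)\log T)$ concentration events (this dictates $r_k(i)\asymp\sqrt{\log T / n_k(i)}$ for the scheduled sample count $n_k(i)$), all players carry the same certified state at every time step, hence play the same canonical assignment, which is injective by construction, and the exploration schedule is injective by construction as well. For the regret bound: on the good event the collective $\Theta(2^k m/K)$ samples per active arm identify, within phase $O(\log(1/\Delta_i))$, every arm $i$ whose mean lies a gap $\Delta_i$ below the $m$-th largest; committing to confirmed arms and eliminating the rest makes the per-phase regret telescope to $\tilde{O}\bigl(\sum_i \min(\Delta_i T,\,1/\Delta_i)\bigr)$, up to a $\mathrm{poly}(K)\cdot m$ overhead coming from the $O(K^2)$ pairwise comparisons and the round-robin slowdown, which in the worst case is $O\bigl(mK^{11/2}\sqrt{T\log T}\bigr)$.

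I expect the main obstacle to be the \emph{global} injectivity of the assignment rather than pairwise consistency of conclusions (the latter is immediate from concentration). Even with no contradictory pairwise facts, stale information held by different players could in principle drive two of them onto the same arm, so the skeleton and its read-off rule must be engineered so that \emph{every} monotone accumulation of certified facts produces a path of injective assignments, with the shared-randomness fallback seamlessly covering the ``not yet enough information'' regime. Designing and verifying that combinatorial apparatus — not any probabilistic estimate — is the crux of the argument.
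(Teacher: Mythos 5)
There is a genuine gap, and it sits exactly where you yourself locate the ``crux.'' Your no-collision argument rests on the claim that, outside the bad concentration event, \emph{all players carry the same certified state at every time step}. The inflated-margin device does not deliver this: each player certifies a fact only when \emph{its own} empirical gap exceeds the inflated threshold, and since the players' samples are independent, concentration only tells you that when player $X$ certifies, every other player's empirical gap clears the \emph{un-inflated} threshold --- not the inflated one that triggers certification. So at any given time the players' certified states can genuinely differ (typically by ``one step''), which is precisely the regime in which stale information can drive two players onto the same arm, and in which your fallback-to-shared-randomness rule also breaks (one player may be in fallback while another has committed). You acknowledge this and say the combinatorial apparatus ensuring injectivity under every monotone accumulation of certified facts must be ``designed and verified,'' but you do not design it; that apparatus is the actual content of the theorem. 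In the paper it takes the form of a stable partition of $[0,1]^K$ indexed by a tree of doubly ordered partitions, with interface layers guaranteeing that two players' estimates always land in tree-adjacent cells (Lemma \ref{lem:topology}), together with a collision-robust coloring so that adjacent cells never assign the same arm to two different players (Lemma \ref{lem:color}); nothing in your proposal substitutes for these two steps.

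Two secondary issues. First, your regret accounting assumes the active set shrinks in a synchronized way across players, which again presupposes the unproved state synchronization; once states can differ, you must also argue that lingering near a decision threshold (where no player certifies, and players may sit on different sides of a tentative split) does not accumulate regret --- the paper handles the analogous issue by \emph{randomizing} the decision boundaries via the shared randomness (Lemma \ref{lem:cost}), and your scheme has no counterpart unless you carry out a gap-dependent elimination analysis that is compatible with the (missing) collision-robust assignment rule. Second, a minor bookkeeping point: with a round-robin of each uncommitted player over the active arms, each \emph{individual} player collects about $2^k/K$ pulls per active arm in phase $k$, not $2^k m/K$; the $m$ factor is collective and does not help a single player's estimate, so the per-player confidence radii must be set accordingly.
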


\paragraph{Related works.} The mutiplayer bandit problem with limited communication was first introduced roughly at the same time in \cite{LJP08, LZ10, AMTS11}, and has been extensively studied since then \cite{AM14, RSS16, BBMKP17, LM18, BP18, ALK19, BLPS20}, with various assumptions on the communication/collisions. Yet it is only in \cite{BB20} that it was realized that one could in fact obtain the optimal regret {\em without any collisions at all}. The latter result was however limited to $2$ players and $3$ actions. Indeed the strategy was based on a simple $2$-dimensional construction, reproduced here in Figure \ref{fig_partitionBB}. The extension even to $4$ actions seemed very difficult. In the present paper we take a more combinatorial route, which lends itself better to the high-dimensional picture. The analogue of Figure \ref{fig_partitionBB} for our new strategy is Figure \ref{fig:2of3actions} (see below, in the high-level overview of our strategy, for more details on the meaning of the partitions depicted in these figures).

% \begin{figure}
% \begin{center}
% \begin{tikzpicture}

% \draw(-1.414,0)--(-3.5,0);
% \draw(0,0)--(135:3.5);
% \draw(0.518,0.897)--(60:3.5);
% \draw(0,0)--(300:3.5);
% \draw[dashed] (-1.414,0)--(0,0);
% \draw[dashed] (0.518,0.897)--(0,0);
% \draw(-1.414,0)--(-3.414,2);
% \draw(0.518,0.897)--(-1.482,2.897);
% \draw[dashed] (0,0)--(1,0);

% \draw(0,0)node{};
% \draw(0,-0.3)node[texte]{$\mathcal{D}$};

% \draw(-3.5,1.3)node[texte]{$D_t$};
% \draw[blue](-3.5,0.8)node[texte]{$i^A=1$};
% \draw[blue](-3.5,0.3)node[texte]{$i^B=2$};
% \draw(-1.7,1)node[texte]{$C'_t$};
% \draw(-2,-1)node[texte]{$C''_t$};
% \draw[blue](-2,-1.5)node[texte]{$i^A=3$};
% \draw[blue](-2,-2)node[texte]{$i^B=2$};
% \draw(-1,1.7)node[texte]{$B'_t$};
% \draw(2,0.5)node[texte]{$B''_t$};
% \draw[blue](2,0)node[texte]{$i^A=3$};
% \draw[blue](2,-0.5)node[texte]{$i^B=1$};
% \draw(0.2,3)node[texte]{$A_t$};
% \draw[blue](0.2,2.5)node[texte]{$i^A=2$};
% \draw[blue](0.2,2)node[texte]{$i^B=1$};

% \draw[red, <->] (-2.1,2)--(-2.7,1.4);
% \draw[red, <->] (-2,2.1)--(-1.4,2.7);
% \draw[red] (-2.6,1.9) node[texte]{$w_t$};
% \draw[red] (-1.9,2.6) node[texte]{$w_t$};
% \draw[red](0.3,0) arc(0:135:0.3);
% \draw[red] (0.5,0.2) node[texte]{$\Theta$};

% \end{tikzpicture}
% \end{center}
% \caption{The basic partition in \cite{BB20}.}\label{fig_partitionBB}
% \end{figure}

\begin{figure}
\centering
\vspace{-0.5cm}
  \includegraphics[width=0.5\linewidth]{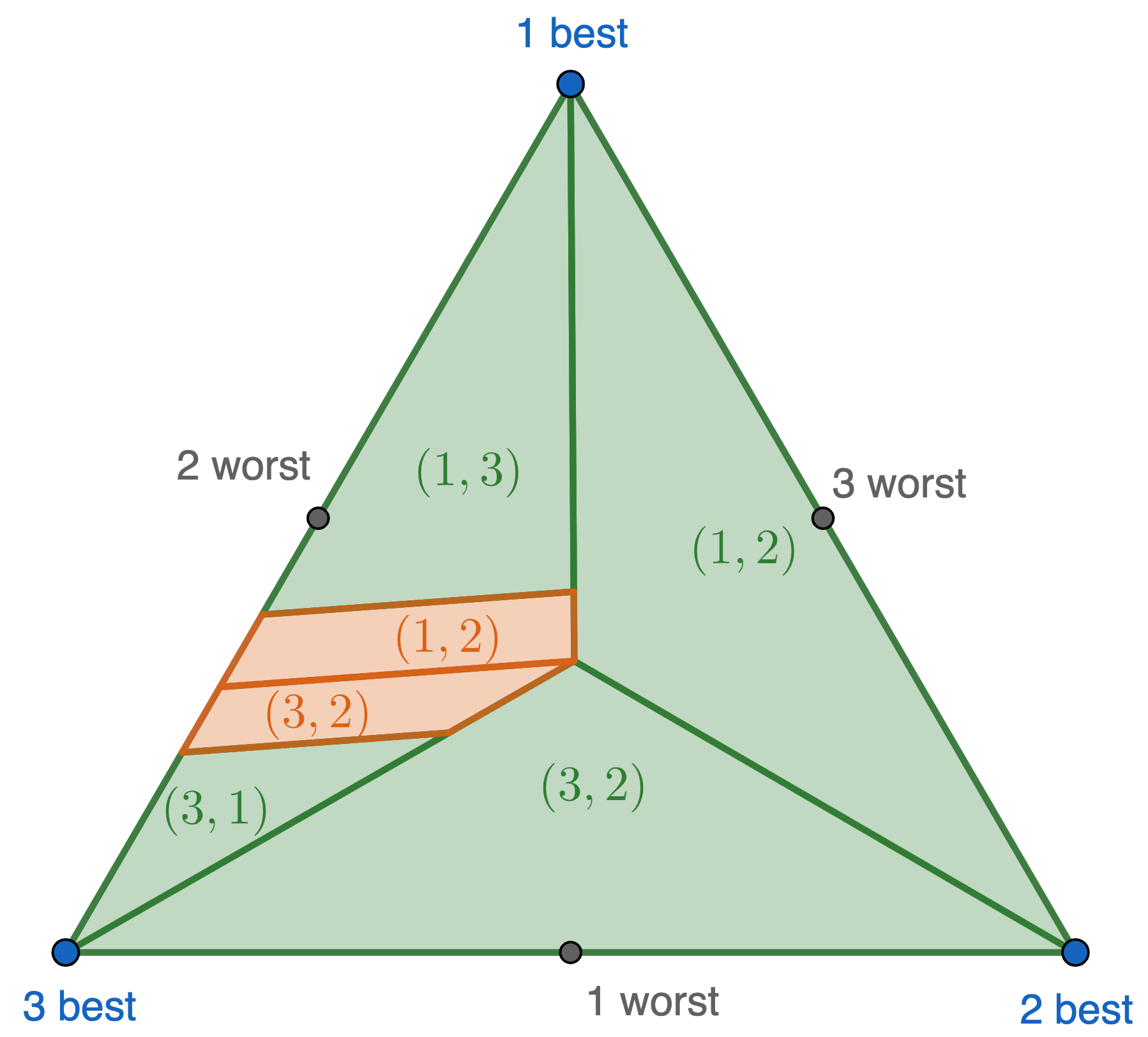}
  \caption{The basic partition in \cite{BB20}, restricted to a plane where $p(1)+p(2)+p(3)$ is constant. The parts are labelled by the actions each player plays based on their estimate of $\p$. Crucially, adjacent regions never result in collisions. Except for the small orange region, the players always play the top $2$ actions.}
  \includegraphics[width=\linewidth]{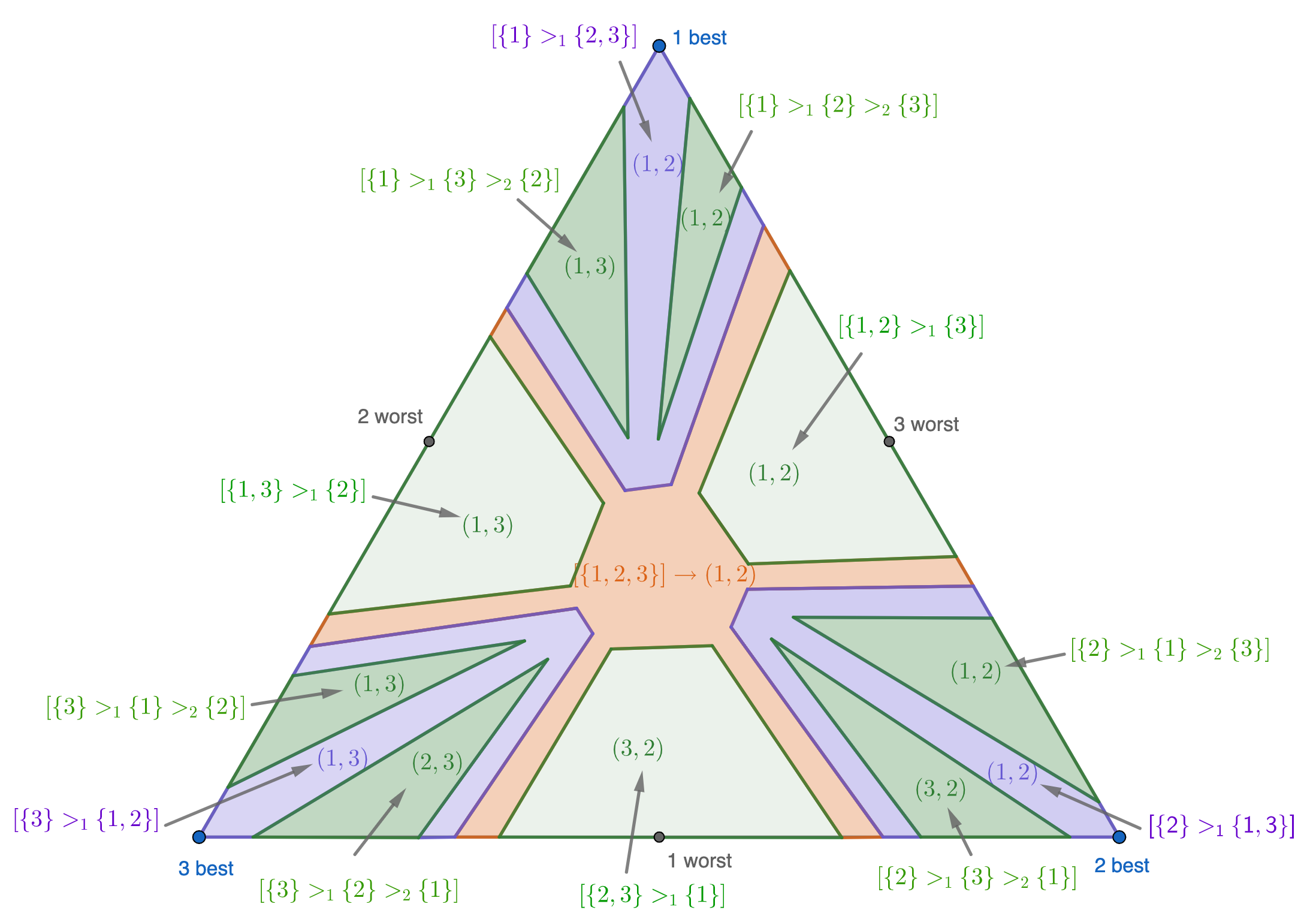}
  \caption{Our new partition for $2$ players and $3$ actions, also restricted to a plane where $p(1)+p(2)+p(3)$ is constant. The parts are labelled by pairs of actions as before, as well as vertices of the tree $\mathcal T_{K,m}$ defined in subsection~\ref{subsec:Tkm}. There are now two levels of ``skeleton" shown in orange and purple, separating the main parts in which the top $2$ actions are always played. Again adjacent regions never result in a collision. In contrast with the partition above, our new partition extends to general $(K,m)$.}\label{fig_partitionBB}
  \label{fig:2of3actions}
\end{figure}

% \begin{figure}
% \begin{center}
%   \includegraphics[width=\linewidth]{2of3actions_again_improved.png}
%   \caption{Our new partition for $2$ players and $3$ actions, also restricted to a plane where $p(1)+p(2)+p(3)$ is constant.}
%   \label{fig:2of3actions}
%   \end{center}
% \end{figure}

\paragraph{Paper organization.} We start in Section \ref{sec:partition} with our main construction, a certain ``stable" partition of the cube $[0,1]^K$. Following \cite{BB20} we then put this partition to work in Section \ref{sec:fullinfo} on a simpler version of the problem, the so-called {\em full information case}, where the players observe at each round a reward on {\em all} the arms, but these rewards are independent realizations for each player. Finally, in Section \ref{sec:bandit}, we show how to deal with the extra difficulty of exploration for the bandit setting, and we complete the proof of Theorem \ref{thm:bandit}.

\paragraph{High level overview of our strategy.} Our strategy is based on a (time-dependent) colored partition of $[0,1]^K$, where the colors correspond to $m$-tuples of arms. At a given time step, players decide which arm to play based on the color of the partition element that contains their respective current empirical estimate of $\p$. We describe our partition using a decision tree, where each node is labeled with a set of currently undecided arms (i.e. for which we have not yet decided whether they belong to the top $m$ actions or not) and the children of this node correspond to the possible subdivisions of this undecided set. This naturally gives a hierarchical decomposition of $[0,1]^K$, and one of our key insights is to assign a partition element not only to the leaves of this decomposition, but also to any part which is {\em close} to a decision boundary at some inner node. The effect of the latter modification is a certain {\em stability} property of the partition: as long as the empirical estimates of the players remain close, the partition elements that decide the players' actions will be neighbors in the decision tree (Lemma \ref{lem:topology}). Thus to avoid collision it suffices to find a coloring such that two players never collide if they play the arms recommended by two neighboring nodes in the decision tree (Lemma \ref{lem:color}). What about the regret of such a strategy? If players play according to a leaf of the tree, then in fact they will act optimally (Lemma~\ref{lem:guarantee}) or close to optimally, up to the deviations between $\p$ and its estimates. The question is how to control the regret from inner nodes, which correspond to the interfaces on Figure~\ref{fig:2of3actions}, where the players make suboptimal choices. For this, one needs to have the geometric picture in mind: to stay at an inner node means that one was close to a decision boundary, which will only happen for a ``small" fraction of the instances $\p$. To turn things around, as in \cite{BB20}, we choose a {\em random} decision boundary, so that for any fixed instance $\p$, the probability to be in the boundary will be small. It remains to carefully analyze how the ``small" probability relates to the regret from playing suboptimally in this boundary (Lemma \ref{lem:cost}).

\section{A partition of the hypercube} \label{sec:partition}
In this section, which comprises the main new contribution of this work, we construct our randomized partition of the hypercube $[0,1]^K$ together with the corresponding piecewise constant strategy for the $m$ players. This was the main barrier in extending \cite{BB20} beyond the case of $2$ players and $3$ actions.

\subsection{The tree $\mathcal T_{K,m}$}

\label{subsec:Tkm}

The most basic goal of our construction is to partition $[0,1]^K$ based on the order of the coordinates. We begin by giving the combinatorial setup which underlies this construction. We first consider the class of \emph{ordered partitions} of $[K]$. These are set partitions of $[K]$ where the elements within a part are not ordered, but the parts are ordered. That is, an ordered set partition of $[K]$ has the form
\[P=\left[S_1> S_2 >\dots >S_j\right],\]
where $(S_i)_{i=1}^j$ partition $[K]$. For example $\left[\{1,3,5\}>\{2,6,7\}>\{4\}\right]$ is an ordered set partition of $\{1,2,3,4,5,6,7\}$ and is identical to $\left[\{5,1,3\}>\{6,2,7\}>\{4\}\right]$. We will construct these inequalities ``one at a time" and it turns out we need to keep track of the order in which they are introduced. Therefore we define a \emph{doubly ordered partition} (henceforth DOP) to be an ordered set partition in which the inequality signs are themselves ordered. Thus a DOP of $[K]$ has the form
\[P=\left[S_1>_{\sigma(1)} S_2 >_{\sigma(2)}\dots >_{\sigma(j-1)} S_j\right]\]
for some permutation $\sigma\in \mathfrak{S}_{j-1}.$ For example \[\left[\{1,3,5\}>_1\{2,6,7\}>_2\{4\}\right] \quad \mbox{and} \quad \left[\{1,3,5\}>_2\{2,6,7\}>_1\{4\}\right]\] are the two DOPs with underlying ordered partition $\left[\{1,3,5\}>\{2,6,7\}>\{4\}\right]$. The set of DOPs that we have just defined has a natural tree structure, and we denote this tree by $\mathcal T_K$. More precisely, the root of $\mathcal T_K$ is the trivial DOP $\mathrm{ROOT} := \left[\{1,2,\dots,K\}\right]$ and, for every DOP
\[P_1=\left[S_1>_{\sigma(1)} S_2>_{\sigma(2)}\dots >_{\sigma(i-1)}S_i >_{j-1} S_{i+1}>_{\sigma(i+1)}\dots>_{\sigma(j-1)}S_j\right]\]
that is $\ROOT$ (i.e. with $j \geq 2$), the parent of $P_1$ is
\[P=\left[S_1>_{\sigma(1)} S_2>_{\sigma(2)}\dots >_{\sigma(i-1)}S_i \cup S_{i+1}>_{\sigma(i+1)}\dots>_{\sigma(j-1)}S_j\right].\]
In other words, descending the tree $\mathcal T_k$ amounts to adding inequalities $>_1, >_2, \dots$ in this order.

The tree we will work with throughout this paper is a subtree $\mathcal T_{K,m}\subseteq\mathcal T_K$. The reason is that we are only concerned with identifying the set of the top $m$ actions, potentially without knowing the relative order within these top $m$. To focus on this, we introduce the following definitions. Let $i(P)$ be the largest integer $i \geq 0$ such that $\sum_{j=1}^i |S_j| \leq m$ (for example $i(\mathrm{ROOT})  = 0$). We define the set $A(P):=S_1 \cup \hdots \cup S_{i(P)}$ (with the convention $A(P) = \emptyset$ if $i(P)=0$), which corresponds to the set of actions that the DOP $P$ has already identified as being in the top $m$ actions. We now define the set $B(P)$ of actions that needs to partitioned further to fully identify the top $m$ actions: if $|A(P)|=m$ then $B(P) := \emptyset$, and otherwise $B(P) := S_{i(P)+1}$. We can now define $\mathcal T_{K,m}\subseteq \mathcal T_K$ as the subtree formed by paths from the root where only inequalities involving $B(P)$ may be added to a DOP $P$ at any time. In other words, we define $\mathcal{T}_{K,m}$ recursively as follows: let
\[P_1=\left[S_1>_{\sigma(1)} S_2>_{\sigma(2)}\dots >_{\sigma(i-1)}S_i >_{j-1} S_{i+1}>_{\sigma(i+1)}\dots>_{\sigma(j-1)}S_j\right]\]
be a DOP and let
\[P=\left[S_1>_{\sigma(1)} S_2>_{\sigma(2)}\dots >_{\sigma(i-1)}S_i \cup S_{i+1}>_{\sigma(i+1)}\dots>_{\sigma(j-1)}S_j\right]\]
be its parent. If $P \in \mathcal{T}_{K,m}$, then $P_1 \in \mathcal{T}_{K,m}$ if and only if $B(P)=S_i \cup S_{i+1}$. See Figure~\ref{fig:tree23} for an example. We also denote by $\L(\mathcal T_{K,m})$ the set of leaves of the tree $\mathcal{T}_{K,m}$. Note that the leaves of $\mathcal T_{K,m}$ are DOPs which determine the top $m$ actions. However not all DOPs determining the top $m$ actions are leaves of $\mathcal T_{K,m}$. For example we have \[[\left\{4,8\}>_2\{2,6,7\}>_1\{1,3,5\}\right]\in\mathcal T_{8,2} \quad \text{but} \quad \left[\{4,8\}>_1\{2,6,7\}>_2\{1,3,5\}\right]\notin\mathcal T_{8,2}.\] The latter holds because the parent DOP $\left[\{4,8\}>_1\{1,3,5,2,6,7\}\right]$ determines the top $2$ actions hence is already a leaf of $\mathcal T_{8,2}$.

\begin{figure}
\begin{center}
\begin{tikzpicture}
\draw(0.9,0)--(2.2,1.5);
\draw(0.9,0)--(2.2,0.9);
\draw(0.9,0)--(2.2,0.3);
\draw(0.9,0)--(2.2,-0.3);
\draw(0.9,0)--(2.2,-0.9);
\draw(0.9,0)--(2.2,-1.5);

\draw(4.8,0.9)--(6.2,1.2);
\draw(4.8,0.9)--(6.2,0.6);
\draw(4.8,-0.3)--(6.2,0);
\draw(4.8,-0.3)--(6.2,-0.6);
\draw(4.8,-1.5)--(6.2,-1.2);
\draw(4.8,-1.5)--(6.2,-1.8);

\draw(0,0)node[txt]{$[\{1,2,3\}]$};

\draw(3.5,1.5)node[txt]{$[\{1,2\} >_1 \{3\}]$};
\draw(3.5,0.9)node[txt]{$[\{1\} >_1 \{2,3\}]$};
\draw(3.5,0.3)node[txt]{$[\{1,3\} >_1 \{2\}]$};
\draw(3.5,-0.3)node[txt]{$[\{3\} >_1 \{1,2\}]$};
\draw(3.5,-0.9)node[txt]{$[\{2,3\} >_1 \{1\}]$};
\draw(3.5,-1.5)node[txt]{$[\{2\} >_1 \{1,3\}]$};

\draw(8,1.2)node[txt]{$[\{1\} >_1 \{2\} >_2 \{3\}]$};
\draw(8,0.6)node[txt]{$[\{1\} >_1 \{3\} >_2 \{2\}]$};
\draw(8,0)node[txt]{$[\{3\} >_1 \{1\} >_2 \{2\}]$};
\draw(8,-0.6)node[txt]{$[\{3\} >_1 \{2\} >_2 \{1\}]$};
\draw(8,-1.2)node[txt]{$[\{2\} >_1 \{1\} >_2 \{3\}]$};
\draw(8,-1.8)node[txt]{$[\{2\} >_1 \{3\} >_2 \{1\}]$};
\end{tikzpicture}
\end{center}
\vspace{-1cm}
\caption{The tree $\mathcal{T}_{3,2}$, with $9$ leaves and $4$ inner nodes.}\label{fig:tree23}
\end{figure}
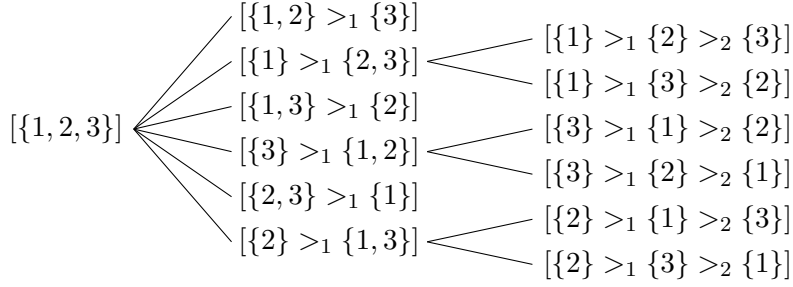

For $P\in\mathcal T_{K,m}$ we use $\mathrm{CHILDREN}(P)$ to denote its set of children, and $\mathrm{PARENT}(P)$ to denote its (unique) parent. For example we have
\[\mathrm{PARENT}\left(\left[\{1,3,5\}>_1\{2,6,7\}>_2\{4\}\right]\right)~=~\left[\{1,3,5\}>_1\{2,4,6,7\}\right].\] For convenience we will treat $\mathcal T_{K,m}$ as a partial order, so that $Q\preceq P$ means $Q$ is a ancestor of $P$. In particular, the root satisifes $\mathrm{ROOT}\preceq P$ for any $P\in\mathcal T_{K,m}$. Finally we denote by $d_{\mathcal T_{K,m}}$ the graph distance in the tree $\mathcal T_{K,m}$.

\subsection{Inequalities to define a region}
Our goal is now to connect DOPs to regions inside $[0,1]^K$. To do so we will use certain types of inequalities involving the values of the coordinates of $\x=\left( x(i) \right) \in [0,1]^K$. We first need a few definitions.

\begin{defn}
Let $\x \in [0,1]^K$ and $P\in\mathcal T_{K,m} \setminus \L(\mathcal T_{K,m})$. We define 
\[\range_{P}(\x):=\max_{k\in B(P)} x(k)- \min_{\ell \in B(P)} x(\ell).\] 
\end{defn}

\begin{defn}
Let $\x \in [0,1]^K$ and $P$ be a DOP of the form:
\begin{equation} \label{eq:DOPform}
P=\left[S_1>_{\sigma(1)} S_2>_{\sigma(2)}\dots >_{\sigma(i-1)}S_i >_{j-1} S_{i+1}>_{\sigma(i+1)}\dots>_{\sigma(j-1)}S_j\right].
\end{equation}
We define \[\gap_{P}(\x)=\min_{k \in S_{i}}x(k)-\max_{\ell \in S_{i+1}} x(\ell).\]
\end{defn}

In words, $\range_P(\x)$ represents the range of values in the set of coordinates for which the DOP $P$ has not yet identified whether they are in the top $m$ actions or not. On the other hand, $\gap_P(\x)$ represents how large was the ``cut" made by the DOP $P$ when we added its last inequality. The next easy lemma says that there always exists a ``large cut".

\begin{lem}
\label{prop:cover}
Let $\x \in [0,1]^K$ and $P\in\mathcal T_{K,m} \setminus \L(\mathcal T_{K,m})$. There exists a DOP $Q \in \mathrm{CHILDREN}(P)$ such that
\[\gap_{Q}(\x)\geq \frac1{K}\cdot \range_{P}(\x)\geq 0.\]
\end{lem}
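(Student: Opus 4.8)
The plan is to unwind the definition of $\CHILDREN(P)$ and then apply a one-dimensional pigeonhole argument to the coordinates $\bigl(x(k)\bigr)_{k\in B(P)}$. First I would record two structural facts. Since $P$ is not a leaf, $|A(P)|<m$, so $B(P)=S_{i(P)+1}$ is well defined, and by maximality of $i(P)$ (which forces $|A(P)|+|S_{i(P)+1}|>m$) we get $|B(P)|>m-|A(P)|\ge 1$, hence $b:=|B(P)|\ge 2$. Moreover, by the recursive definition of $\mathcal T_{K,m}$, the children of $P$ are exactly the DOPs obtained from $P$ by replacing the block $B(P)$ by an ordered pair $T_1>T_2$ of nonempty sets with $B(P)=T_1\cup T_2$, the new inequality carrying the largest index. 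For the child $Q$ associated with such a split, the definition of $\gap$ reads off this newest inequality, so $\gap_Q(\x)=\min_{k\in T_1}x(k)-\max_{\ell\in T_2}x(\ell)$.

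With this in hand, I would sort the relevant coordinates: write $B(P)=\{k_1,\dots,k_b\}$ so that $x(k_1)\le x(k_2)\le\dots\le x(k_b)$. Then $\range_P(\x)=x(k_b)-x(k_1)\ge 0$ equals the telescoping sum $\sum_{t=1}^{b-1}\bigl(x(k_{t+1})-x(k_t)\bigr)$ of $b-1\le K-1<K$ nonnegative terms, so some term is at least the average: there is an index $t^\ast\in\{1,\dots,b-1\}$ with $x(k_{t^\ast+1})-x(k_{t^\ast})\ge\frac1{b-1}\range_P(\x)\ge\frac1K\range_P(\x)$. I would then take $T_1:=\{k_{t^\ast+1},\dots,k_b\}$ and $T_2:=\{k_1,\dots,k_{t^\ast}\}$, both nonempty, and let $Q\in\CHILDREN(P)$ be the corresponding child (it lies in $\mathcal T_{K,m}$ precisely because the block that was split is $B(P)$). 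By the sorting, $\min_{k\in T_1}x(k)=x(k_{t^\ast+1})$ and $\max_{\ell\in T_2}x(\ell)=x(k_{t^\ast})$, so $\gap_Q(\x)=x(k_{t^\ast+1})-x(k_{t^\ast})\ge\frac1K\range_P(\x)\ge 0$, which is the claim.

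This is essentially just an averaging argument, so I do not expect a genuine obstacle. The only points that need a little care are the bookkeeping identifying $\CHILDREN(P)$ with ordered bipartitions of $B(P)$ — so that the newest inequality of the child $Q$ is exactly the chosen cut, which is what $\gap_Q$ measures — and the inequality $|B(P)|\ge 2$, which guarantees that the telescoping sum is nonempty and that the denominator $b-1$ does not exceed $K$.
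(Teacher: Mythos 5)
Your proof is correct and follows essentially the same route as the paper: sort the coordinates of $B(P)$, apply pigeonhole/averaging to the adjacent differences to find a gap of size at least $\frac{1}{K}\range_P(\x)$, and take the child corresponding to that cut. The paper's proof is just a terser version of the same argument, with your extra bookkeeping ($|B(P)|\ge 2$ and the identification of children with ordered bipartitions of $B(P)$) left implicit.
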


\begin{proof}
Write $B(P)=\{a_1,\dots,a_{\ell}\}$, with $x(a_1)\geq x(a_2)\geq\dots\geq x(a_{\ell})$ (note that $\ell \geq 2$ since $P$ is not a leaf). The pigeonhole principle implies that some adjacent pair of values $x(a_j),x(a_{j+1})$ will differ by at least $\frac{\range_P(\x)}{K}$, so one can simply set $Q$ to be the child of $P$ which separates $B(P)$ into $\{ x(a_1), \dots, x(a_j) \}$ and $\{ x(a_{j+1}), \dots, x(a_{\ell}) \}$.
\end{proof}

\subsection{Constructing the partition}

\begin{figure}[t]

\SetKwFor{Loop}{loop}{}{end}

\begin{algorithm2e}[H]\label{alg:partition}
\caption{Definition of the mapping $\mathcal P_{c, \varepsilon} : [0,1]^K \rightarrow \mathcal T_{K,m}$.}

\SetAlgoLined\DontPrintSemicolon

\textbf{parameters:} $c:\mathcal T_{K,m}\to [0,\frac{1}{K}]$, $\varepsilon>0$, \textbf{input:} $x\in [0,1]^K$, \textbf{returns:} vertex $P \in \mathcal T_{K,m}$.

Initialize $P= \mathrm{ROOT}$.

\While{$P\notin \L(\mathcal T_{K,m})$}{\label{while}

\For{$Q\preceq P$}{

Write $B(Q)=\{a_1,\dots,a_{\ell}\}$, with $x(a_1)\geq x(a_2)\geq\dots\geq x(a_{\ell})$. \label{writeB(Q)} \\
\For{$j=1,2,\dots,\ell-1$}{%
Define the child $Q_j$ of $Q$ by splitting $B(Q)$ into $\{a_1,\dots,a_j\}>\{a_{j+1},\dots,a_{\ell}\}.$

\uIf{$\left|\gap_{Q_j}(\x)-c(Q)\cdot \range_Q(\x)\right| \leq (d_{\mathcal T_{K,m}}(P,Q) +1) \cdot 6\varepsilon$ \label{line:newskel}}{\Return $P$}

}

}

Write $B(P)=\{a_1,\dots,a_{\ell}\}$, with $x(a_1)\geq x(a_2)\geq\dots\geq x(a_{\ell})$. \\
\For{$j=1,2,\dots,\ell-1$}{%
Define the child $P_j$ of $P$ by splitting $B(P)$ into $\{a_1,\dots,a_j\}>\{a_{j+1},\dots,a_{\ell}\}.$

\uIf{$\gap_{P_j}(\x) \geq c(P) \cdot \range_P(\x)$ \label{line:childineq}}{

\tcp*[f]{By Lemma \ref{prop:cover} and $c(P) \leq \frac{1}{K}$, this occurs for at least one $j \in [\ell-1]$.}

$P\leftarrow P_j$\label{line:child} \\

\textbf{break} (go back to line~\ref{while})}

}%end for

\tcp*[f]{While loop terminated so $P$ is a leaf.
}}

\Return $P$

\end{algorithm2e}
\end{figure} 

We now finally construct the partition of $[0,1]^K$, depending in a deterministic way on a function $c:\mathcal T_{K,m}\to [0,\frac{1}{K}]$ as well as a small constant $\varepsilon>0$. The partition elements will be indexed by vertices of the tree $\mathcal{T}_{K,m}$, or in other words the partition is defined by a mapping $\mathcal P_{c, \varepsilon} : [0,1]^K \rightarrow \mathcal T_{K,m}$. This mapping is easiest to describe algorithmically, which we do in Algorithm \ref{alg:partition}.

Let us now comment on what Algorithm~\ref{alg:partition} is doing (the reader might also find useful to look at Figure~\ref{fig:2of3actions} at the same time). First, lines~\ref{line:childineq} and~\ref{line:child} are the recursive step, where we decide which is the next inequality that we add to our DOP. This decision depends on the parameters $c(P)$. Moreover, line~\ref{line:newskel} for $Q=P$ means that if one of these decisions is $6 \varepsilon$-close to make, we simply output $P$. This corresponds to the interfaces of width $6\varepsilon$ between large cells on Figure~\ref{fig:2of3actions}. Note that the positions of these interfaces depend on the $c(P)$. Moreover, for a given $Q$, the condition of line~\ref{line:childineq} becomes weaker as the algorithm progresses and $P$ gets deeper in the tree $\mathcal{T}_{K,m}$. Therefore, a point that was very close to being assigned to $Q$ will be assigned to its child. This results in ``coating" of the interfaces with layers of width $6\varepsilon$. For example, on Figure~\ref{fig:2of3actions}, this corresponds to the region $[\{1\}>_1\{2,3\}]$ separating the regions $[\{1,2,3\}]$ and $[\{1\}>_1\{2\}>_2\{3\}]$. In general, there can be up to $K$ such interface layers.

A crucial property of the partition is the following stability property. The first item will be useful to ensure the absence of collisions. The goal of the second is to state a ``consistency" property for different values of $\varepsilon$, which will be needed later in the bandit analysis.

\begin{lem}
\label{lem:topology}
We fix $\varepsilon>0$, $c : \mathcal{T}_{K,m} \to \left[ 0, \frac{1}{K} \right]$, and $\x, \y \in [0,1]^K$.
\begin{enumerate}
\item
If $|\x-\y|_{\ell^{\infty}}\leq\varepsilon$, then $d_{\mathcal T_{K,m}} \left( \mathcal P_{c, \varepsilon}(\x), \mathcal P_{c, \varepsilon}(\y) \right) \leq 1$.
\item
Let $P \in \mathcal{T}_{K,m}$ and assume that $|x(i)-y(i)| \leq \varepsilon$ for all $i \in A(P) \cup B(P)$. Let also $\varepsilon' \in (0,\varepsilon]$. Then it is not possible that $\mathcal P_{c, \varepsilon}(\x)$ and $\mathcal P_{c, \varepsilon'}(\y)$ are descendants of two distinct children of $P$.
\end{enumerate}
\end{lem}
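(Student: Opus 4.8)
The plan is to analyze the trajectory of the pointer $P$ through the tree as Algorithm~\ref{alg:partition} runs, comparing the run on input $\x$ with the run on input $\y$. Both runs start at $\mathrm{ROOT}$, and the key observation is that the quantities $\gap_{Q_j}$ and $\range_Q$ are all $1$-Lipschitz (in $\ell^\infty$) functions of the relevant coordinates: $\gap_{Q_j}(\x)$ depends only on a max and a min of coordinates in $B(Q)\subseteq A(Q)\cup B(Q)$, and similarly for $\range_Q$. So if $|\x-\y|_{\ell^\infty}\le\varepsilon$, then for every node $Q$ the ``decision margin'' $\bigl|\gap_{Q_j}(\x)-c(Q)\cdot\range_Q(\x)\bigr|$ and the corresponding quantity for $\y$ differ by at most $2\varepsilon\cdot(1+|c(Q)|)\le 4\varepsilon$ (using $c(Q)\le 1/K\le 1$, so actually the perturbation of $\gap$ contributes $\le 2\varepsilon$ and of $c(Q)\range$ contributes $\le 2\varepsilon$ — I will bookkeep the exact constant, but it will comfortably fit inside the $6\varepsilon$ slack built into line~\ref{line:newskel}).

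For item~1, I would argue inductively that the two runs stay ``synchronized'': more precisely, I claim that at each stage of the while loop, the current pointers $P_\x$ and $P_\y$ are either equal, or one is a child of the other, or they share a common parent — and in fact I would try to prove the cleaner statement that the set of nodes visited by the two runs along the root path agree up to the last one. The mechanism is: suppose both runs have followed the same path down to a node $P$. When the algorithm processes the inner loop over $Q\preceq P$ and $j$, if one run triggers the return on line~\ref{line:newskel}, then its margin at $(Q,j)$ is $\le (d_{\mathcal T}(P,Q)+1)\cdot 6\varepsilon$; hence the other run's margin there is $\le (d_{\mathcal T}(P,Q)+1)\cdot 6\varepsilon + 4\varepsilon \le (d_{\mathcal T}(P,Q)+2)\cdot 6\varepsilon$. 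The other run either also returns at $P$ (at some possibly different $(Q,j)$, still fine since then the outputs are equal), or it descends one more step via line~\ref{line:child} to some child $P'$ of $P$; but at $P'$ we have $d_{\mathcal T}(P',Q)=d_{\mathcal T}(P,Q)+1$, so the return condition at $P'$ for the pair $(Q,j)$ now reads ``margin $\le (d_{\mathcal T}(P,Q)+2)\cdot 6\varepsilon$'', which is satisfied — so the second run returns $P'$ at the very next iteration. This gives $d_{\mathcal T}(\mathcal P_{c,\varepsilon}(\x),\mathcal P_{c,\varepsilon}(\y))\le 1$. The remaining bookkeeping is to check that the two runs really do agree on the path down to $P$: this follows because if the first run takes the branch $P\to P_j$ on line~\ref{line:child} then $\gap_{P_j}(\x)\ge c(P)\range_P(\x)$ with no return triggered, meaning the margin $\bigl|\gap_{P_j}(\x)-c(P)\range_P(\x)\bigr|> 6\varepsilon$ is actually safely positive, hence $\gap_{P_j}(\y)-c(P)\range_P(\y)>6\varepsilon-4\varepsilon>0$ as well, so the second run is also eligible to branch to $P_j$ — and since line~\ref{line:child} takes the \emph{first} eligible $j$, and the two $\gap$-orderings of $B(P)$ are themselves stable when no two $\gap$-values are within $4\varepsilon$... this last point is the subtle part, discussed below.

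For item~2, the argument is essentially the same but localized: the hypothesis only controls $|x(i)-y(i)|$ for $i\in A(P)\cup B(P)$, which is exactly the set of coordinates that the algorithm looks at while the pointer is at an ancestor $Q\preceq P$ (since $B(Q)\subseteq A(P)\cup B(P)$ for such $Q$, as descending from $Q$ to $P$ only refines within $B(Q)$). So the run on $\x$ and the run on $\y$ take identical decisions — same branch on line~\ref{line:child} — as long as the pointer is at $P$ or above, \emph{unless} one of them returns. I would argue: suppose for contradiction $\mathcal P_{c,\varepsilon}(\x)$ lies below child $P_1$ of $P$ and $\mathcal P_{c,\varepsilon'}(\y)$ lies below a distinct child $P_2$. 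For the $\x$-run to pass through $P$ and descend to $P_1$, line~\ref{line:newskel} was never triggered at $P$, so in particular the margin $\bigl|\gap_{P_j}(\x)-c(P)\range_P(\x)\bigr|>6\varepsilon$ for the specific $j$ selected, and moreover $P_1$ is selected as the \emph{first} $j$ with $\gap\ge c(P)\range$. The same for the $\y$-run with $\varepsilon'\le\varepsilon$: since $\varepsilon'\le\varepsilon$, the return threshold on line~\ref{line:newskel} is only \emph{smaller} for the $\y$-run, so the $\y$-run triggers returns even less readily — this monotonicity in $\varepsilon$ is the reason item~2 allows $\varepsilon'\ne\varepsilon$. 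Using the $4\varepsilon$-closeness of all relevant $\gap$ and $\range$ values between $\x$ and $\y$, I conclude the $\y$-run must select the same first branch $P_1$, contradicting $P_2\ne P_1$.

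The main obstacle I anticipate is the one flagged above: controlling the \emph{ordering} of $B(Q)=\{a_1,\dots,a_\ell\}$ by coordinate value (line~\ref{writeB(Q)}), which determines which children $Q_j$ even exist as candidates. If two coordinates in $B(Q)$ are nearly tied, the sorted orders for $\x$ and $\y$ can differ, so the enumerated children $Q_j$ are literally different sets. The resolution is that a near-tie forces a small $\gap$ for the ``wrong'' split and a near-equality of the two relevant $\gap$-values, which means the margin condition on line~\ref{line:newskel} is triggered (the algorithm returns $P$ before this ambiguity matters), or else both splits are so far from the $c(Q)\range$ threshold that swapping a nearly-tied adjacent pair does not change which is the first $j$ crossing the threshold on line~\ref{line:childineq}. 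Making this precise — a case analysis showing that whenever the two sorted orders disagree, the disagreement is ``invisible'' to both the return test and the branch test up to the $6\varepsilon$ slack — is the technical heart of the proof, and is exactly where the factor $6\varepsilon$ (rather than, say, $2\varepsilon$) and the $(d_{\mathcal T}(P,Q)+1)$ multiplier in line~\ref{line:newskel} are calibrated to do their job.
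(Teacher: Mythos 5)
Your overall plan follows the same route as the paper's proof (compare parallel runs of Algorithm~\ref{alg:partition} on $\x$ and $\y$; use the depth-dependent slack $(d_{\mathcal T_{K,m}}(P,Q)+1)\cdot 6\varepsilon$ in line~\ref{line:newskel} to absorb the $O(\varepsilon)$ perturbation so that the lagging run stops at most one level deeper; for item~2 use that only coordinates in $A(P)\cup B(P)$ are ever inspected above $P$, plus the one-sided role of $\varepsilon'$). But there is a genuine gap: the step you yourself flag as ``the technical heart'' --- synchronizing the branch choice in line~\ref{line:child} when the sorted orders of $B(Q)$ under $\x$ and $\y$ differ, so that the enumerated children $Q_j$ are literally different set-splits and the ``first eligible $j$'' could a priori differ --- is not carried out, only sketched, and the sketch (``a near-tie either triggers line~\ref{line:newskel} or is invisible to the branch test'') is not a proof and is not obviously sufficient as stated. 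Without it, neither the claim that the two runs agree down to $P$ (item~1) nor the claim that the $\y$-run selects the same child of $P$ (item~2) is established; these are exactly the conclusions of the lemma, not side bookkeeping.

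For comparison, the paper closes this gap with a two-sided argument rather than a case analysis on near-ties. First, if the $\x$-run branches at index $j_{\x}$, then since line~\ref{line:newskel} did not fire one has $\gap_{P_{j_{\x}}(\x)}(\x)\geq c(P)\range_P(\x)+6\varepsilon$, hence $\gap_{P_{j_{\x}}(\x)}(\y)>c(P)\range_P(\y)$; the strict positivity shows that the \emph{same set-split} is among the children enumerated for $\y$ (i.e.\ the top-$j_{\x}$ sets by the two orders coincide) and is eligible, so $j_{\y}\leq j_{\x}$. Second, assuming $j_{\y}<j_{\x}$, one gets $\gap_{P_{j_{\y}}(\x)}(\y)\leq c(P)\range_P(\y)-4\varepsilon$ while $\gap_{P_{j_{\y}}(\y)}(\y)>c(P)\range_P(\y)$, and writing both gaps as differences of sorted coordinates contradicts $|\x-\y|_{\ell^\infty}\leq\varepsilon$; this is where the perturbed ordering is genuinely handled, at the level of individual coordinates rather than via a near-tie dichotomy. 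A similar construction is needed at the point in your item-1 argument where the run on $\y$ descends to $P'$: the triggering pair $(Q,j)$ from the $\x$-run is a split by $\x$'s order, so you must exhibit a corresponding candidate split $Q'$ in $\y$'s enumeration of $\CHILDREN(Q)$ with $|\gap_{Q'}(\y)-\gap_{Q_j}(\x)|\leq 2\varepsilon$ before the $(d+2)\cdot 6\varepsilon$ threshold argument applies; the paper does this explicitly. Until these constructions are supplied, the proposal is an outline of the correct strategy rather than a proof.
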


\begin{proof}
We start with the second point, since it will be useful for the first one.
Assume that we run Algorithm~\ref{alg:partition} on both $\x$ with $(c,\varepsilon)$ and on $\y$ with $(c, \varepsilon')$ in parallel and that both instances are currently both on $P$ (in the while loop). Then we need to prove that the two instances do not branch into two distinct children of $P$. If $P$ is the final output for either $\x$ or $\y$ then the conclusion is immediate. If not, we denote by $P_j(\x)$ (resp. $P_j(\y)$) the $j$-th child of $P$ considered by Algorithm~\ref{alg:partition} run on $\x$ (resp. on $\y$). Note that the coordinates may be ordered differently in $\x$ and in $\y$, so it not clear that the children of $P$ considered for $\x$ and $\y$ should be the same. Assume that an instruction $P\leftarrow P_j$ occurs for $\x$ at $j=j_{\x}$ (resp. at $j=j_{\y}$ for $\y$).
If $P\leftarrow P_{j_{\x}}(\x)$ occurs for $\x$ in line~\ref{line:child} of the algorithm, we have
\begin{equation}\label{eqn:gap_positive}
\gap_{P_{j_{\x}}(\x)}(\x) \geq c(P) \cdot \range_P(\x).
\end{equation}
On the other hand, since $P$ was not the output for $\x$, the inequality of line~\ref{line:newskel} is not satisfied, which means that the distance between the left and the right-hand side of the last display is at least $6\varepsilon$, i.e.
\begin{equation}\label{eqn:gap_4epsilon}
\gap_{P_{j_{\x}}(\x)}(\x) \geq c(P) \cdot \range_P(\x)+6\varepsilon.
\end{equation}
Hence, since $|\x-\y|_{\ell^{\infty}} \leq \varepsilon$, we also have
\[\gap_{P_{j_{\x}}(\x)}(\y) > c(P) \cdot \range_P(\y).\]
In particular, the fact that the left-hand side is positive means that $P_{j_{\x}}(\x)$ is also one of the children of $P$ considered when the algorithm is run on $\y$, i.e. $P_{j_{\x}}(\y)=P_{j_{\x}}(\x)$. Moreover, the last display means that the inequality in line~\ref{line:childineq} is satisfied when the algorithm considers $P_{j_{\x}}(\y)=P_{j_{\x}}(\x)$ for $\y$. This proves that $j_{\y} \leq j_{\x}$.

We now assume $j_{\y} < j_{\x}$ and will reach a contradiction (note that the argument is not symmetric in $\x$ and $\y$ since we assume $|\x-\y| \leq \varepsilon$ and not $|\x-\y| \leq \varepsilon'$). Similarly to Equation~\eqref{eqn:gap_4epsilon}, using the fact that $P$ is not the output for $\x$ and that $P  \leftarrow P_{j_{\y}}$ does not occur when we run the algorithm on $\x$, we have
\[\gap_{P_{j_{\y}}(\x)}(\x) \leq c(P) \cdot \range_P(\x) -6\varepsilon.
\]
Using $|\x-\y|_{\ell^{\infty}} \leq \varepsilon$, we deduce
\[\gap_{P_{j_{\y}}(\x)}(\y) \leq c(P) \cdot \range_P(\y) -4\varepsilon.
\]
On the other hand, for the exact same reason as in Equation~\eqref{eqn:gap_4epsilon}, we have
\[\gap_{P_{j_{\y}}(\y)}(\y) \geq c(P) \cdot \range_P(\y)+6\varepsilon' > c(P) \cdot \range_P(\y).
\]
From the last two displays, we obtain
\[ \left| \gap_{P_{j_{\y}}(\x)}(\y)-\gap_{P_{j_{\y}}(\y)}(\y) \right| > 4\varepsilon.\]
On the other hand, we write $B(P)=\{a_1,\dots,a_{\ell}\}=\{b_1, \dots, b_{\ell}\}$, where $x(a_1) \geq \dots \geq x(a_{\ell})$ and $y(b_1) \geq \dots \geq y(b_{\ell})$. Then by definition of $\gap$, the last display becomes
\begin{equation}\label{eqn_gap_x_y}
\left| \left( y(b_{j_{\y}})-y(b_{j_{\y}+1})\right)-\left( y(a_{j_{\y}})-y(a_{j_{\y}+1})\right) \right| > 2\varepsilon.
\end{equation}
On the other hand, using the assumption $|\x-\y|_{\ell^{\infty}}$, we have
\[ \left| y(b_{j_{\y}}) - y(a_{j_{\y}}) \right| \leq \left| y(b_{j_{\y}}) - x(a_{j_{\y}}) \right| + \left| x(a_{j_{\y}}) - y(a_{j_{\y}}) \right| \leq 2\varepsilon \]
and similarly for $j_{\y}+1$. This contradicts Equation~\eqref{eqn_gap_x_y}, so we obtain $j_{\x}=j_{\y}$. As explained in the first part of the proof, we have $P_{j_{\x}}(\x)=P_{j_{\x}}(\y)$, so both instances of the algorithm branch in this child of $P$, which proves the second item of the Lemma.

We now prove the first item. We first use the second item with $\varepsilon'=\varepsilon$. Note that the assumption $|\x-\y|_{\ell^{\infty}} \leq \varepsilon$ is stronger than the assumption needed on the second item. Therefore, we know that when we run the algorithm on $\x$ and $\y$ in parallel, the runs agree as long as both instances are still running. Now assume without loss of generality that $\x$ stops first. If $\x$ stops at a leaf $P \in \L(\mathcal T_{K,m})$ then we already know by the second item that $\y$ also stops at $P$. So let us assume that $\x$ stops at an inner node $P$, and that $\y$ continues to a child $P'$ of $P$ (if $\y$ continues it must be at a child of $P$ by the second item). We argue now that $\y$ must in fact stop at $P'$, which concludes the proof. First note that since $\x$ stops at $P$, it means that line \ref{line:newskel} occured for some child $Q_j$ of $Q \preceq P$ obtained by splitting $B(Q)=\{a_1,\hdots,a_{\ell}\}$ into $\{a_1,\hdots,a_j\} > \{a_{j+1},\hdots, a_{\ell}\}$. In other words one has:
\[\left|\gap_{Q_j}(\x)-c(Q)\cdot \range_Q(\x)\right| \leq (d_{\mathcal T_{K,m}}(P,Q) +1) \cdot 6\varepsilon .\]
Now observe that, since $|\x-\y|_{\ell^{\infty}} \leq \varepsilon$, one has $|\range_Q(\x)-\range_Q(\y)| \leq 2 \varepsilon$. Next denote $b_1,\hdots,b_{\ell}$ a permutation of $a_1,\hdots, a_{\ell}$ such that $y(b_1) \geq \hdots \geq y(b_{\ell})$ and let $j'$ be the index $j$ of the largest $y(j)$ which is $\varepsilon$-close to $x(a_j)$. Now consider the child $Q'$ of $Q$ obtained by splitting $B(Q)$ into $\{b_1, \dots, b_{j'}\}>\{b_{j'+1},\dots,b_{\ell}\}$. One has $|\gap_{Q'}(\y)-\gap_{Q_j}(\x)| \leq 2 \varepsilon$. In particular we obtain by the triangle inequality:
\[\left|\gap_{Q'}(\y)-c(Q)\cdot \range_{Q}(\y)\right| \leq 4 \varepsilon +(d_{\mathcal T_{K,m}}(P,Q) +1) \cdot 6\varepsilon = (d_{\mathcal T_{K,m}}(P',Q) +1) \cdot 6\varepsilon .\]
In particular, in the run on $\y$, when the while loop is at $P'$, line \ref{line:newskel} will occur when invoked on $Q'$, which shows that the run stops at $P'$.
\end{proof}

The next lemma establishes three basic guarantees for Algorithm~\ref{alg:partition}. In particular, it ensures that the DOP associated to $\x$ describes the order of the coordinates of $\x$, e.g. if $\mathcal{P}_{c,\varepsilon}(\x)$ is of the form $\left[ \{1\}>\{2\}>\dots>\{K\} \right]$, then the coordinates of $\x$ are in nonincreasing order.

\begin{lem}\label{lem:guarantee}
Suppose Algorithm~\ref{alg:partition} is run on $\x \in [0,1]^K$. Suppose that the operation $P\leftarrow \widehat P$ occurs during Algorithm~\ref{alg:partition} (in Line~\ref{line:child}). Then:

\begin{enumerate}

	\item \label{it:guar1} $\x$ obeys all the inequalities of $\widehat P$.

	\item \label{it:guar2} If for any ancestor $Q\preceq P$ and any child $Q_j$ of $Q$ the inequality
	\begin{equation}
	\label{eq:10K}|\gap_{Q_j}(\x) -  c(Q) \cdot \range_{Q}(\x)| \leq 10K \varepsilon.
	\end{equation}
never holds, and if $\y\in [0,1]^K$ satisfies $|\x-\y|_{\ell^{\infty}}\leq \varepsilon$, then $P\leftarrow \widehat P$ also occurs when Algorithm~\ref{alg:partition} is run on $\y$.
	\item \label{it:guar3} If $\widehat P\notin \L(\mathcal T_{K,m})$ is not a leaf and is the final output of Algorithm~\ref{alg:partition}, then there exists some ancestor $Q\preceq \hat P$ with a child $Q_j\in \CHILDREN(Q)$ such that inequality~\eqref{eq:10K} holds.
\end{enumerate}
\end{lem}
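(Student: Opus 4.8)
My plan is to prove all three items by directly tracking the execution of Algorithm~\ref{alg:partition} as it descends $\mathcal{T}_{K,m}$. Let $\ROOT=P^{(0)},P^{(1)},\dots,P^{(r)}$ be the successive values taken by the variable $P$ during the run on $\x$, so that each step $P\leftarrow P^{(s+1)}$ happens in Line~\ref{line:child} and $P^{(r)}$ is the output. For Item~1 I induct on $s$: when $P\leftarrow P^{(s+1)}$ occurs, $P^{(s+1)}$ arises from $P^{(s)}$ by splitting $B(P^{(s)})=\{a_1,\dots,a_\ell\}$ (sorted so $x(a_1)\ge\dots\ge x(a_\ell)$) into $\{a_1,\dots,a_j\}>\{a_{j+1},\dots,a_\ell\}$, and the guard of Line~\ref{line:childineq} gives $x(a_j)-x(a_{j+1})=\gap_{P^{(s+1)}}(\x)\ge c(P^{(s)})\cdot\range_{P^{(s)}}(\x)\ge 0$; so $\x$ obeys the newly introduced inequality, and as every other inequality of $P^{(s+1)}$ is already an inequality of $P^{(s)}$ (which $\x$ obeys by induction, the base case $P^{(0)}=\ROOT$ having no inequalities), $\x$ obeys all inequalities of $P^{(s+1)}$.

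For Item~3, note that the algorithm returns only at Line~\ref{line:newskel} or at the final return statement after the while loop exits; in the latter case $P$ is a leaf, and moreover every iteration of the while loop either returns or descends, since Lemma~\ref{prop:cover} together with $c(P)\le\frac1K$ always provides a child $P_j$ meeting the descent guard. Hence a non-leaf output $\widehat P=P^{(r)}$ must have been returned at Line~\ref{line:newskel}, which by definition exhibits an ancestor $Q\preceq\widehat P$ and a child $Q_j$ with $\left|\gap_{Q_j}(\x)-c(Q)\cdot\range_Q(\x)\right|\le(d_{\mathcal{T}_{K,m}}(\widehat P,Q)+1)\cdot 6\varepsilon$. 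Since every DOP of $[K]$ has at most $K$ parts, $\mathcal{T}_{K,m}$ has depth at most $K-1$, so the right-hand side is at most $6K\varepsilon\le 10K\varepsilon$, which is~\eqref{eq:10K}.

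Item~2 is the substantive case. Writing $\widehat P=P^{(t)}$ in the above enumeration, I would run Algorithm~\ref{alg:partition} on $\x$ and on $\y$ with the same $(c,\varepsilon)$ in parallel and prove by induction on $s\le t-1$ that the $\y$-run reaches $P^{(s)}$ (not a leaf, so the $\y$-run is then inside the while loop), does not return at $P^{(s)}$, and descends to $P^{(s+1)}$; applying this at $s=t-1$ shows $P\leftarrow\widehat P=P^{(t)}$ occurs on the $\y$-run. The engine is the order-statistics fact that when $|\x-\y|_{\ell^{\infty}}\le\varepsilon$, sorting $B(Q)$ as $x(a_1)\ge\dots\ge x(a_\ell)$ and as $y(b_1)\ge\dots\ge y(b_\ell)$ gives $|x(a_i)-y(b_i)|\le\varepsilon$ for every $i$; hence $\left|\bigl(x(a_i)-x(a_{i+1})\bigr)-\bigl(y(b_i)-y(b_{i+1})\bigr)\right|\le 2\varepsilon$ for every $i$, and $|\range_Q(\x)-\range_Q(\y)|\le 2\varepsilon$. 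Combining these with the hypothesis $\left|\gap_{Q_j}(\x)-c(Q)\cdot\range_Q(\x)\right|>10K\varepsilon$ for every $Q\preceq\widehat P$ and every threshold child $Q_j$ (w.r.t. $\x$), I obtain: (a) at $P^{(s)}$, every instance of the Line~\ref{line:newskel} guard for the $\y$-run still fails, because the $10K\varepsilon$ lower bound survives a perturbation of at most $4\varepsilon$ and still beats the threshold $(d_{\mathcal{T}_{K,m}}(P^{(s)},Q)+1)\cdot 6\varepsilon\le 6K\varepsilon$; and (b) in the descent loop, the guard holds for the $\y$-run at the same first index $j^*$ at which it holds for the $\x$-run and fails at every smaller index, so both descent loops select their $j^*$-th threshold child. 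Finally, $\gap_{P^{(s+1)}}(\x)=x(a_{j^*})-x(a_{j^*+1})>10K\varepsilon>2\varepsilon$ forces all $y$-values of $\{a_1,\dots,a_{j^*}\}$ to lie strictly above all $y$-values of $\{a_{j^*+1},\dots,a_\ell\}$; thus the top-$j^*$ subset of $B(P^{(s)})$ is the same under $\x$ and $\y$, so the two selected children coincide as vertices of $\mathcal{T}_{K,m}$ and the $\y$-run descends to exactly $P^{(s+1)}$, closing the induction.

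The step I expect to be the main obstacle is precisely this bookkeeping in Item~2 forced by the fact that $\x$ and $\y$ may order $B(Q)$ differently: the ``$j$-th threshold child'' denotes different combinatorial objects for the two runs, and the two selected children can be identified only after exploiting the strict value gap of more than $2\varepsilon$ guaranteed by the $10K\varepsilon$ hypothesis. Indeed the constant $10K\varepsilon$ is engineered so that its slack simultaneously absorbs the order-statistics error ($\le 2\varepsilon$), the perturbation of $\range$ ($\le 2\varepsilon$), and the depth-dependent widening of the skeleton threshold (up to $6K\varepsilon$).
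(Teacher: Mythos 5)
Your proof is correct and follows essentially the same route as the paper: item 1 from the sorting in Line~\ref{writeB(Q)}, item 3 from the depth bound making the Line~\ref{line:newskel} threshold at most $6K\varepsilon\leq 10K\varepsilon$, and item 2 from the $10K\varepsilon$ slack absorbing the $\leq 4\varepsilon$ perturbations so that Line~\ref{line:newskel} never fires and the signs in Line~\ref{line:childineq} agree along the descent. Your explicit handling of the re-sorting of $B(Q)$ under $\y$ (identifying the selected children via the $>2\varepsilon$ value gap) is a careful elaboration of a step the paper's proof treats implicitly, not a different argument.
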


\begin{proof}
To show the first assertion, we simply observe that Algorithm~\ref{alg:partition} sorts the actions of $B(P)$ in line~\ref{writeB(Q)} according to their values at the point $\x$, hence every inequality sign added is true for $\x$.

For the second assertion, we observe that the factor $10K$ in Equation~\ref{eq:10K} is so large that, by the triangle inequality, Line~\ref{line:newskel} of the algorithm applies to neither $\x$ nor $\y$ for any $Q_j$ a child of $Q\preceq P$. Moreover for these $Q_j$ and $Q$, the signs of $\left(\gap_{Q_j}(\x) -  c(Q) \cdot \range_{Q}(\x)\right)$ and $\left(\gap_{Q_j}(\y) -  c(Q) \cdot \range_{Q}(\y)\right)$ always agree, so $x,y$ behave identically in Line~\ref{line:childineq}. Since $x,y$ do not terminate before $\widehat P$ in Line~\ref{line:newskel} and behave identically in Line~\ref{line:childineq} before $P\leftarrow \widehat P$, we conclude that $\widehat P\leftarrow P$ occurs when Algorithm~\ref{alg:partition} is run on $\y$.

Finally for the last assertion, because $\widehat P$ is not a leaf, Line~\ref{line:newskel} of Algorithm~\ref{alg:partition} must have been the reason to terminate the algorithm on $\x$. As $\mathcal T_{K,m}$ has depth at most $K$, inequality~\ref{eq:10K} must hold just before termination.
\end{proof}

\subsection{Coloring the partition}\label{subsec:coloring}

We now want to turn our partition into a full strategy. For this, we need a rule specifying, for each DOP $P$ and each player $X$, which arm player $X$ should play in the partition element $P$. We construct here a ``robust" rule, such that when eventually combined with Lemma~\ref{lem:topology} it will give a collision-free strategy for the players. We start with two definitions.

\begin{defn}
For a DOP $P$, define $\Feas_P\subseteq \binom{[K]}{m}$ to consist of all $m$-subsets of $[K]$ which comprise the top $m$ actions in some total ordering extending $P$.
\end{defn}
Note that this is more stringent than only requiring that each element might individually be in the top $m$. In particular, sequences in $\Feas_P$ contain all elements of $A(P)$ and a fixed size subset of $B(P)$.

\begin{defn}
An $m$-coloring of $\mathcal T_{K,m}$ is a function $F:\mathcal T_{K,m}\to [K]^m$. An $m$-coloring $F=(f_1, \hdots, f_m)$ is called \emph{collision-robust} if for any $P,Q\in\mathcal T_{K,m}$ with $d_{\mathcal T_{K,m}}(P,Q) \leq 1$ and any $i,j\in [m]$ with $f_i(P)=f_j(Q)$, one must necessarily have $i=j$. 
\end{defn}

Now, fix $c,\varepsilon$ and suppose we are given an arbitrary function $G:\mathcal T_{K,m}\to\binom{[K]}{m}$ such that $G(P)\in \Feas_P$ for all $P$. Then we claim that there is a robust $m$-coloring $F$ such that $F(P)$ is a permutation of $G(P)$ for all $P$. To see this, we simply proceed recursively down the tree. We choose $F(\mathrm{ROOT})$ to be the lexicographically first (i.e. sorted) permutation of $G(\mathrm{ROOT})$, and then for $P\in\mathcal T_{K,m}$ choose $F(P)$ to be a permutation of $G(P)$ with maximum possible overlap with $F(\PARENT(P))$ (if there are several such permutations, pick the first lexicographically). That is, if $f_i(\PARENT(P)) \in G(P) \cap G(\PARENT(P))$, we have $f_i(P)=f_i(\PARENT(P))$. It is easy to see that this produces a robust $m$-coloring. Hence, we have shown the following.

\begin{lem}
\label{lem:color}
For any \[G:\mathcal T_{K,m}\to\binom{[K]}{m}\] such that $G(P)\in \Feas_P$ for all $P$, there is a collision-robust coloring \[F:\mathcal T_{K,m}\to [K]^m,\] where $F(P)$ is always a permutation of $G(P)$.
\end{lem}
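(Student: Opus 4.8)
The plan is to prove Lemma~\ref{lem:color} exactly as the surrounding text already suggests: by an explicit greedy construction that descends the tree $\mathcal T_{K,m}$ and, at each node, picks a permutation of $G(P)$ agreeing as much as possible with the permutation already chosen at $\PARENT(P)$. The content is then to verify that this greedy choice is automatically collision-robust, which requires checking the defining condition only for pairs $P,Q$ at tree-distance $1$, i.e. when one of them is the parent of the other.

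First I would set up the recursion precisely. Define $F(\ROOT)$ to be the sorted (lexicographically first) permutation of $G(\ROOT)\in\binom{[K]}{m}$. Then, traversing $\mathcal T_{K,m}$ in any order extending $\preceq$ (e.g. BFS), for each non-root $P$ with parent $R=\PARENT(P)$, choose $F(P)=(f_1(P),\dots,f_m(P))$ to be a permutation of the set $G(P)$ maximizing $\#\{i\in[m]: f_i(P)=f_i(R)\}$, breaking ties lexicographically. The key structural observation to record is that this greedy maximizer does the obvious thing on the common elements: if $a:=f_i(R)\in G(P)\cap G(R)$, then $f_i(P)=a$. Indeed, suppose not; then $a$ appears in $F(P)$ at some position $i'\ne i$, and some element $b$ of $G(P)$ sits at position $i$ of $F(P)$. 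If $b\notin F(R)$ we can swap the entries at positions $i,i'$ of $F(P)$, strictly increasing the overlap with $F(R)$ (position $i$ now matches, position $i'$ changes from a match of $a$ to possibly a non-match of $b$ — net change $\ge 0$, and actually we should argue this more carefully by considering the permutation that agrees with $F(R)$ on all of $G(P)\cap G(R)$ and is arbitrary elsewhere, which has overlap $\ge |G(P)\cap G(R)|$; any permutation not of this form has strictly smaller overlap on those coordinates). So the clean way is: the maximum possible overlap equals exactly $|G(P)\cap G(R)|$, it is attained only by permutations that place each shared element $a=f_i(R)$ at position $i$, and the greedy rule picks such a permutation. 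This gives: for every $i$, either $f_i(P)=f_i(R)$, or $f_i(R)\notin G(P)$ (equivalently $f_i(R)$ is absent from $F(P)$ entirely).

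Next I would verify collision-robustness. Take $P,Q$ with $d_{\mathcal T_{K,m}}(P,Q)\le 1$ and $i,j\in[m]$ with $f_i(P)=f_j(Q)=:a$; we must show $i=j$. If $P=Q$ this is immediate since $F(P)\in[K]^m$ has a coordinate equal to $a$ in a unique position (because $F(P)$ is a permutation of a set $G(P)$, hence has distinct entries). Otherwise WLOG $Q=\PARENT(P)$. Then $a=f_i(P)\in G(P)$ and $a=f_j(Q)\in G(Q)$, so $a\in G(P)\cap G(Q)$. By the structural observation applied to the shared element $f_j(Q)=a$, the greedy rule forces $f_j(P)=f_j(Q)=a$. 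But $F(P)$ has distinct coordinates, so the position in $F(P)$ carrying the value $a$ is unique; since both $i$ and $j$ carry value $a$, we get $i=j$, as desired.

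Finally I would note that $F(P)$ is by construction a permutation of $G(P)$ for every $P$, so the lemma is proved. The only mildly delicate point — the "main obstacle," such as it is — is pinning down the greedy optimality claim cleanly: one must make sure the tie-breaking and the maximization genuinely force $f_j(P)=f_j(Q)$ for shared elements, rather than merely making it possible. The argument above via "the maximum overlap is $|G(P)\cap G(Q)|$, attained precisely by permutations aligning shared elements" handles this without case analysis; everything else is bookkeeping about permutations having distinct coordinates. No properties of $\Feas_P$ beyond $G(P)\in\binom{[K]}{m}$ are needed for this lemma, so I would not invoke them here.
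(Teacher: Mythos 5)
Your proposal is correct and is essentially the paper's own argument: the paper uses exactly the same greedy descent (choose $F(P)$ as a permutation of $G(P)$ maximizing overlap with $F(\PARENT(P))$, ties broken lexicographically) and asserts that shared elements keep their positions, leaving the robustness check as ``easy to see.'' You simply fill in that verification—maximum overlap equals $|G(P)\cap G(\PARENT(P))|$ and is attained only by position-preserving permutations, plus distinctness of entries—which is a faithful elaboration rather than a different route.
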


\section{The full feedback scenario} \label{sec:fullinfo}
We consider here the {\em full information} version of the problem described in the introduction, where the players observe at each round a reward on {\em all} the arms (not only the one played as in the bandit case), but these rewards are independent realizations for each player. We propose to use the construction from the previous section to build a strategy for this full information game. We fix a robust $m$-coloring $F$ of $\mathcal T_{K,m}$ given by applying Lemma~\ref{lem:color} to the function $G$ which sets $G(P)$ to be the lexicographically first element of $\Feas_P$. We take $c:\mathcal T_{K,m}\to\left[0,\frac{1}{K}\right]$ to be an i.i.d. uniform function of the distance to the root in $\mathcal{T}_{K,m}$. That is, we set
\[c(P)=C(d(P,\ROOT)) ,\]
where $C(0),C(1),\dots,C(K-1)$ are i.i.d. uniform variables on $\left[0,\frac{1}{K}\right]$. For each time $t \in [T]$, we write
\[\varepsilon_t := 10 \sqrt{\frac{\log (mKT)}{t}}.\]
The strategy followed by player $X$ at time $t$ is then the following:
\begin{enumerate}
	\item Denote by $\q_t^X = \left( q_t^X(i) \right) \in [0,1]^K$ the vector of empirical mean rewards of the arms observed by $X$ from time $1$ to time $t-1$.
	\item Apply Algorithm~\ref{alg:partition} to find $\mathcal P_{c, \varepsilon_t} (\q_t^X) \in\mathcal T_{K,m}$.
	\item Play the action $i_t^X = f_X(\mathcal P_{c, \varepsilon_t} (\q_t^X))$, where $F=(f_1,\dots,f_m)$.
\end{enumerate}

The key observation is the following lemma that will be used to bound the probability that Algorithm~\ref{alg:partition} stops at an inner node. This is important to estimate, since this event may result in suboptimal arm choices.
\begin{lem}
\label{lem:cost}
Let $\delta>0$ and $\x \in [0,1]^K$. For any $P\in\mathcal T_{K,m}$ of depth $h$ and any child $Q$ of $P$, we have
\[\mathbb P\left( \left| \gap_{Q}(\x)-C(h)\cdot \range_{P}(\x)\right|\leq \delta ~|~ C(0), \hdots, C(h-1) \right) \leq \frac{2K\delta}{\range_{P}(\x)}.\]
% \[\mathbb P[\x\in \Skel_{P,c,\varepsilon}]\leq \frac{K\varepsilon}{\range_{r(P)}(\x)}.\]
\end{lem}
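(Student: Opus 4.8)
The plan is to condition on $C(0),\dots,C(h-1)$ and treat $C(h)$ as the only source of randomness, so that $\range_P(\x)$ is a fixed deterministic quantity (as is $\gap_Q(\x)$). Write $r := \range_P(\x)$. If $r = 0$, then $B(P)$ is a single value on all coordinates, but $P$ is not a leaf so $|B(P)| \geq 2$; in that degenerate case $\range_P(\x) = 0$ forces $\gap_Q(\x) = 0$ and $C(h)\cdot r = 0$, and the claimed bound $\frac{2K\delta}{0}$ should be read as $+\infty$, so there is nothing to prove. Hence assume $r > 0$. The event in question is
\[
\left\{ \left| \gap_Q(\x) - C(h)\cdot r \right| \leq \delta \right\} = \left\{ C(h) \in \left[ \frac{\gap_Q(\x) - \delta}{r}, \frac{\gap_Q(\x)+\delta}{r} \right] \right\},
\]
an interval of length $\frac{2\delta}{r}$ in the variable $C(h)$.

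Next I would recall that $C(h)$ is uniform on $\left[0,\frac1K\right]$, so its density is $K$ on that interval and $0$ elsewhere. Therefore the conditional probability that $C(h)$ lands in any interval is at most $K$ times the length of that interval (intersected with $[0,\frac1K]$, but dropping the intersection only increases the bound). This gives
\[
\mathbb P\left( \left| \gap_Q(\x) - C(h)\cdot r \right| \leq \delta ~\middle|~ C(0),\dots,C(h-1) \right) \leq K \cdot \frac{2\delta}{r} = \frac{2K\delta}{\range_P(\x)},
\]
which is exactly the claim.

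There is essentially no obstacle here: the lemma is a one-line density estimate once one sets up the conditioning correctly and observes that, conditionally, everything except $C(h)$ is constant. The only subtlety worth a sentence is the measurability/independence point — namely that $c(P) = C(d(P,\ROOT))$ depends only on the depth, so conditioning on $C(0),\dots,C(h-1)$ fixes the parameters $c(Q')$ for all strict ancestors $Q'$ of $P$ but leaves $c(P) = C(h)$ genuinely uniform and independent of that conditioning; and since $\gap_Q(\x)$ and $\range_P(\x)$ are deterministic functions of $\x$ alone (not of $c$), they are unaffected. I would also note explicitly that one should interpret the bound as vacuous when $\range_P(\x) = 0$, so that the statement is literally true for all $P$ and $\x$ as written.
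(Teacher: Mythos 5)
Your proof is correct and follows essentially the same route as the paper: condition so that $C(h)$ remains uniform on $\left[0,\tfrac{1}{K}\right]$, observe the event is $C(h)$ landing in an interval of length $\tfrac{2\delta}{\range_P(\x)}$, and multiply by the density $K$. The only addition is your explicit handling of the degenerate case $\range_P(\x)=0$, which the paper leaves implicit.
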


\begin{proof}
By construction $C(h)$ is uniform in $\left[0,\frac{1}{K}\right]$ even under the conditioning. Moreover, the set of values $C(h)$ for which the event on the left occurs is an interval of length $\frac{2\delta}{\range_{P}(\x)}$, which implies the claim.
\end{proof}

Our main result for this section is as follows:
\begin{thm} 
\label{thm:goodpartition} 
The strategy described above satisfies for any $\p \in [0,1]^K$:
\[
\mathbb{E}[R_T] \leq O(m K^4 \sqrt{T \log(T)} ).
\]
Furthermore with probability at least $1-\frac{1}{T}$ the players never collide.\footnote{$m$ can be replaced easily by $\min(m,K-m)$. The same holds in Theorem~\ref{thm:bandit}.}
\end{thm}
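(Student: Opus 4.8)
The plan is to prove Theorem~\ref{thm:goodpartition} by combining the three structural ingredients already developed: the stability property (Lemma~\ref{lem:topology}), the collision-robust coloring (Lemma~\ref{lem:color}), and the boundary-probability estimate (Lemma~\ref{lem:cost}). First I would set up the standard good event: with probability at least $1-\frac1T$, for every player $X$ and every time $t$, the empirical estimate satisfies $|\q_t^X - \p|_{\ell^\infty} \le \varepsilon_t/2$. This follows from Hoeffding plus a union bound over the $mKT$ relevant (player, arm, time) triples, using the definition $\varepsilon_t = 10\sqrt{\log(mKT)/t}$; the factor $10$ is generous enough to absorb the constant. On this good event, any two players $X,Y$ have $|\q_t^X - \q_t^Y|_{\ell^\infty} \le \varepsilon_t$, so Lemma~\ref{lem:topology}(1) gives $d_{\mathcal T_{K,m}}(\mathcal P_{c,\varepsilon_t}(\q_t^X), \mathcal P_{c,\varepsilon_t}(\q_t^Y)) \le 1$, and then collision-robustness of $F$ forces $i_t^X \ne i_t^Y$. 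This handles the no-collision claim entirely (and the deterministic guarantee, since the good event does not depend on $c$).

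For the regret, I would condition on the good event (the complement contributes at most $T \cdot mT \cdot \frac1T = O(mT)$... actually one must be slightly careful: the complement has probability $\le 1/T$ and regret is at most $mT$ per the trivial bound, contributing $O(m)$ to $\mathbb E[R_T]$, negligible). On the good event, fix a time $t$ and a player $X$ with estimate $\q = \q_t^X$ close to $\p$. The key dichotomy: either $\mathcal P_{c,\varepsilon_t}(\q)$ is a leaf, in which case by Lemma~\ref{lem:guarantee}(1) the DOP's inequalities hold at $\q$, hence (since $\q$ is $\varepsilon_t$-close to $\p$) the action $f_X(\mathcal P_{c,\varepsilon_t}(\q)) \in \Feas_{\mathcal P_{c,\varepsilon_t}(\q)}$ is in the top $m$ actions of $\p$ up to an error of $2\varepsilon_t$ in reward; or $\mathcal P_{c,\varepsilon_t}(\q)$ is an inner node, in which case by Lemma~\ref{lem:guarantee}(3) there is an ancestor $Q$ and child $Q_j$ with $|\gap_{Q_j}(\q) - c(Q)\cdot\range_Q(\q)| \le 10K\varepsilon_t$. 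The per-round, per-player instantaneous regret is always at most $1$, so what I need is a bound on the expected number of (player, time) pairs that land at an inner node, weighted appropriately — but actually it is cleaner to bound the contribution to regret directly. The instantaneous regret incurred by player $X$ at time $t$ when at an inner node is at most $1$; when at a leaf it is $O(\varepsilon_t)$, which sums to $O(\sqrt{T\log(mKT)})$ over $t$, per player, contributing $O(m\sqrt{T\log T})$ total. So the crux is bounding $\sum_t \mathbb P(\mathcal P_{c,\varepsilon_t}(\q_t^X)$ is an inner node, good event$)$.

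The main obstacle, and the heart of the argument, is this last sum. To bound it I would use Lemma~\ref{lem:cost} with $\delta = 10K\varepsilon_t$: for a fixed ancestor-child pair $(Q, Q_j)$ of depth $h = d(Q,\ROOT)$, conditioning on $C(0),\dots,C(h-1)$,
\[
\mathbb P\!\left(|\gap_{Q_j}(\q) - C(h)\cdot \range_Q(\q)| \le 10K\varepsilon_t\right) \le \frac{2K \cdot 10K\varepsilon_t}{\range_Q(\q)}.
\]
There is a subtlety: $\q = \q_t^X$ is random and could be correlated with $c$ through the strategy, so I would instead first reveal the good event and the deviations, note that $\q_t^X$ depends only on the rewards $(Y_s)_{s<t}$ and not on $c$ (the randomness $c$ is shared and independent of rewards), and then apply Lemma~\ref{lem:cost} pointwise in $\q_t^X$. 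Another subtlety is the denominator $\range_Q(\q)$, which can be tiny; but if $\range_Q(\q)$ is small, say $\le 10K^2\varepsilon_t$, then I would argue separately that being at an inner node with such a small range is itself a low-probability or low-cost event — more precisely, when $\range_Q(\q)$ is small the sets not yet resolved by $Q$ have nearly-equal coordinates, so playing any completion of $Q$ costs only $O(K\range_Q(\q)) = O(K^3\varepsilon_t)$ in reward (this uses that the suboptimality of picking the "wrong" top-$m$ set is controlled by the spread of the contested coordinates). So split on whether $\range_Q(\q) \gtrless 10K^2\varepsilon_t$: in the large case Lemma~\ref{lem:cost} gives probability $O(1/K)$... — no, I need to sum over $t$ and over the $O(K!)$ or rather polynomially-many relevant pairs $(Q,Q_j)$ along the path. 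Summing $\frac{20K^2\varepsilon_t}{\range_Q(\q)} \le \frac{20K^2\varepsilon_t}{10K^2\varepsilon_t} = 2$ over the at-most-$K$ inner nodes on the path and at-most-$K$ children each gives an $O(K^2)$ bound on the expected indicator — too weak. The right move is: the instantaneous regret at time $t$ from being at an inner node via pair $(Q,Q_j)$ is at most $\min(1, O(K\cdot\range_Q(\q)))$ when $\range_Q(\q)$ is small, and when $\range_Q(\q)$ is large the probability of being there is $O(K^2\varepsilon_t/\range_Q(\q))$ which times the trivial regret $1$ and summed gives, after optimizing the threshold on $\range_Q(\q)$ at scale $\sqrt{K^3\varepsilon_t}$-ish, a per-pair per-time contribution of $O(K^{?}\sqrt{\varepsilon_t})$... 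I would actually follow \cite{BB20}'s bookkeeping: fix the pair $(Q,Q_j)$; the expected per-round regret contribution is $\int_0^1 \min(1, O(K\range_Q(\q)), \text{stuff})\,$ — concretely, for each time $t$ the contribution is at most $\mathbb E_c[\min(1, CK^2\range_Q(\q)) \cdot \mathbbm 1\{\text{at inner node via }(Q,Q_j)\}]$ and, bounding $\mathbbm 1$ by Lemma~\ref{lem:cost} only when $\range_Q$ is not too small, one gets $O(K^{5/2}\sqrt{\varepsilon_t})$ or so per pair; there are $O(K^2)$ pairs; summing $\sqrt{\varepsilon_t} \sim (\log(mKT)/t)^{1/4}$ over $t \le T$ gives $O(T^{3/4}\log^{1/4})$ — that's the wrong power of $T$. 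So the correct argument must instead bound the expected \emph{number of rounds} spent at an inner node by $O(\mathrm{poly}(K)\log T)$ via Lemma~\ref{lem:cost} with $\delta = 10K\varepsilon_t$ and the observation that $\varepsilon_t \to 0$ fast enough that $\sum_t \varepsilon_t = O(\sqrt{T\log T})$, combined with a \emph{persistent}-range argument: once $\range_Q(\p)$ is bounded below by some $\rho>0$, the estimate's range is bounded below by $\rho - 2\varepsilon_t$, and the probability of being at an inner node is $\le 20K^2\varepsilon_t/(\rho - 2\varepsilon_t)$, summing to $O(K^2\sqrt{T\log T}/\rho)$; and if $\range_Q(\p) \le \rho$, resolving $Q$ wrongly costs $\le K\rho$ per round, i.e.\ $\le K\rho T$ total, and optimizing $\rho \sim \sqrt{K \log T / T}$ balances these to $O(K^{?}\sqrt{T\log T})$, which matches $O(mK^4\sqrt{T\log T})$ after tracking all the $K$ factors through the $K$ levels, the $K$ children per level, the $K$ in $\delta = 10K\varepsilon_t$, and the $2K$ in Lemma~\ref{lem:cost}. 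I expect reconciling all these polynomial-in-$K$ factors to land exactly at $K^4$ — and carefully justifying that $\q_t^X$ may be treated as independent of $c$ when applying Lemma~\ref{lem:cost} — to be the fiddly part; the conceptual content is entirely contained in Lemmas~\ref{lem:topology}, \ref{lem:color}, and~\ref{lem:cost}.
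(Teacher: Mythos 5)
Your non-collision argument is correct and is essentially the paper's: Hoeffding plus a union bound gives $|\q_t^X-\p|_{\ell^\infty}\le\varepsilon_t/3$ for all $t,X$ with probability $1-\frac1T$, then Lemma~\ref{lem:topology}(1) and collision-robustness of $F$ finish it. The gap is in the regret analysis, and it is exactly at the step you flag as unresolved. The paper's key move, which your proposal never makes, is to \emph{multiply} the two range-dependent bounds for the \emph{same} event rather than decouple them by thresholding. Concretely, the paper runs Algorithm~\ref{alg:partition} on the true vector $\p$ (not on $\q_t^X$, which also sidesteps your independence worry and the randomness of $\range_Q(\q)$), and for each depth $h$ and child index $j$ defines the event $E_{h,j}$ that the near-tie $|\gap_{P^h_j(\p)}(\p)-C(h)\cdot\range_{P^h(\p)}(\p)|\le 10K\varepsilon_t$ occurs at depth $h$ and at no earlier depth. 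On $E_{h,j}$, Lemma~\ref{lem:guarantee}(2)--(3) forces every player's node to coincide with $P^h(\p)$ (or lie below it), so each player's action set lies in $\Feas$ of a descendant of $P^h(\p)$ and the per-round regret of all $m$ players is at most $m\cdot\range_{P^h(\p)}(\p)$ --- not $1$. Lemma~\ref{lem:cost} with $\delta=10K\varepsilon_t$ gives $\mathbb P(E_{h,j}\mid C(0),\dots,C(h-1))\le 20K^2\varepsilon_t/\range_{P^h(\p)}(\p)$, and the range cancels in the product, leaving $20mK^2\varepsilon_t$ per pair $(h,j)$, hence $O(mK^4\varepsilon_t)$ per round after summing over the $\le K$ depths and $\le K$ children, and $O(mK^4\sqrt{T\log T})$ after summing $\varepsilon_t$ over $t$. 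If no $E_{h,j}$ occurs, Lemma~\ref{lem:guarantee} shows every player reaches the same leaf as $\p$ and the regret is zero.

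Your substitute for this step does not work. Bounding the inner-node regret by $1$ and the probability by $O(K^2\varepsilon_t/\rho)$ when $\range\ge\rho$, versus bounding the regret by $O(K\rho)$ with probability $1$ when $\range\le\rho$, and optimizing over a single threshold $\rho$, balances at $\rho\sim K^{1/2}(\log T)^{1/4}T^{-1/4}$ and yields $\Theta(K^{3/2}T^{3/4}(\log T)^{1/4})$ --- the wrong power of $T$, as you yourself observed for an earlier variant. Your final claim that $\rho\sim\sqrt{K\log T/T}$ ``balances these to $O(\sqrt{T\log T})$'' is numerically incorrect: with that choice the large-range term $K^2\sqrt{T\log T}/\rho$ is of order $K^{3/2}T$, i.e.\ linear in $T$. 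The fix is not a better threshold but the observation that the regret-is-at-most-$m\cdot\range$ bound holds on the \emph{same} event whose probability is at most $20K^2\varepsilon_t/\range$, so no case split on the size of the range is needed at all; this is the content of the paragraph in the paper's proof built around the events $E_{h,j}$ and Lemmas~\ref{lem:guarantee} and~\ref{lem:cost}.
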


\begin{proof}
Let us first prove the non-collision property. By the Hoeffding inequality and a crude union bound over times, players and arms, with probability at least $1-\frac{1}{T}$, it holds that for every $t$ and each player $X$, we have
\begin{equation}\label{eqn_error_bound_fullinfo}
\left| \q_t^X-\p \right|_{\ell^{\infty}} \leq \frac{1}{3} \varepsilon_t.
\end{equation}
Moreover, if this occurs, for any two players $X$ and $Y$, we have $|\q_t^X-\q_t^Y| < \varepsilon_t$, so by Lemma~\ref{lem:topology} the DOPs $\mathcal{P}_{c,\varepsilon_t} \left( \q_t^X \right)$ and $\mathcal{P}_{c,\varepsilon_t} \left( \q_t^Y \right)$ are neighbour vertices in $\mathcal T_{K,m}$. By our choice of $F$ using Lemma~\ref{lem:color}, this ensures that there is no collision, i.e. that $X$ and $Y$ do not play the same arm.
\newline

From now on, all the information that we will use in the proof will be the description of the mappings $\mathcal{P}_{c,\varepsilon_t}$ together with the ``stability" result of~\eqref{eqn_error_bound_fullinfo}. We highlight right now that the exact same argument will be needed in the end of the proof in the bandit setting (Section~\ref{sec:bandit}).

We now control the regret under the event \eqref{eqn_error_bound_fullinfo} for a fixed time step $t$. For $\x \in [0,1]^K$, we denote by $P^0(\x)=\ROOT$, $P^1(\x)$, $P^2(\x), \hdots,$ the random (because of $c$) path in $\mathcal T_{K,m}$ visited during the while loop of Algorithm \ref{alg:partition} (line \ref{while}) when we run it on $\x$ with parameters $c$ and $\varepsilon_t$. If the output of the algorithm is at depth $k$ in $\mathcal{T}_{K,m}$, then we denote $P^h(\x) = P^k(\x)$ for any $h \geq k$. Note that $P^1(\x),\hdots,P^h(\x)$ only depends on $C(0), \hdots, C(h-1)$. We also write $P_j^h(\x)$ for the child of $P^h(\x)$ obtained by splitting $B(P^h(\x))=\{a_1,\hdots,a_{\ell}\}$, where $x(a_1)\geq x(a_2)\geq\dots\geq x(a_{\ell})$, into $\{a_1,\dots,a_j\}>\{a_{j+1},\dots,a_{\ell}\}$. For $h \geq 0$ and $j \geq 1$, we consider the events:
\begin{eqnarray*}
E_{h,j} & = & \bigg\{ \left|\gap_{P_j^h(\p)}(\p) -  C(h) \cdot \range_{P^h(\p)}(\p) \right| \leq 10 K \varepsilon_t \\
& & \text{ and } \left| \gap_{P_i^k(\p)}(\p) -  C(k) \cdot \range_{P^k(\p)}(\p) \right| > 10 K \varepsilon_t \text{ for any } k <h \text{ and any } i \bigg\} .
\end{eqnarray*}
The first key observation is that on the event $E_{h,j}$, it must be that $P^h(\p)$ is a node of depth $h$, and furthermore for any $\x$ such that $|\x - \p|_{\ell^{\infty}} \leq \varepsilon_t$ one must also have $P^h(\x) = P^h(\p)$. These follow from items~\ref{it:guar3},~\ref{it:guar2} of Lemma~\ref{lem:guarantee} - part ~\ref{it:guar3} shows $P^h(\p)$ has depth $h$ and part~\ref{it:guar2} shows $P^h(\x)$ takes the same value. Recalling item~\ref{it:guar1} of the same lemma, on the event $E_{h,j}$, we know that the total regret of the $m$ players is upper bounded by $m\cdot \range_{P^h(\p)}(\p)$. Indeed, the top $m$ actions of $\p$ can only differ from the players' actions in the choice of subset of $B(P^h(\x))=B(P^h(\p))$. The second key observation is that by Lemma \ref{lem:cost} one has
\[
\mathbb P(E_{h,j} | C(0), \hdots, C(h-1)) \leq \frac{20 K^2 \varepsilon_t}{\range_{P^h(\p)}(\p)} \,.
\]
Thus we obtain that the contribution to the regret from $E_{h,j}$ is bounded by $20 m K^2 \varepsilon_t$. If none of the events $E_{h,j}$ occurs, then the players play exactly the top $m$ actions according to $\p$, hence incur zero regret. Indeed, this follows from the same discussion as above using Lemma~\ref{lem:guarantee}, but if $P^h(\p)$ is a leaf of $\mathcal{T}_{K,m}$, then $\range_{P^h(\p)}(\p)=0$. Finally, summing over $h$, $j$ and $t$ yields the regret estimate
\begin{equation}\label{eqn:regret_fullinfo_with_epsilon}
\mathbb{E}[R_T] \leq O \left( mK^4 \sum_{t=1}^T \varepsilon_t \right).
\end{equation}
The theorem follows.
\end{proof}

\section{The bandit scenario} \label{sec:bandit}
For the bandit version, we again use mappings of the form $\mathcal P_{c,\varepsilon_t}$, with the exact same $c$ as in the full information case but a larger $\varepsilon_t$. The important difference will be that, in order to avoid neglecting the exploration of some of the arms, we will use a different, randomized coloring of $\mathcal{T}_{K,m}$.
Specifically, at each time $t$ we apply an uniformly random permutation $\pi_t:[K]\to [K]$ to the actions in defining the lexicographic ordering used in Section~\ref{subsec:coloring}, where the $\pi_t$ are independent. This defines a $\pi_t$-random coloring of the vertices of $\mathcal{T}_{K,m}$ and preserves the collision-robustness of Lemma~\ref{lem:color}. Moreover, by symmetry, the randomness of $\pi_t$ causes each $F(P)$ to contain a uniformly random subset of $B(P)$ of the appropriate size $m-|A(P)|$, and in particular to contain any arm $i\in B(P)$ with probability at least $\frac{1}{K}$.

% \begin{lem}
% \label{lem:crit}
% If $x\in\C_P$, then the statement that $x\in \C_{P_j}$ for a child $P_j$ of $P$ in $\mathcal F_{c,\varepsilon'}$ is equivalent to:

% \ms{Redo when the algorithm description is finalized.}

% \begin{enumerate}
% 	\item For any $Q\preceq P_j$:

% 	\[\left|\gap_Q(x)-c(Q)\range_{\PARENT(Q)}(x)\right|> (d(P_j,Q)+1)\cdot 4\varepsilon.\] 
% 	\item For $1\leq i<\ell-1$, we have $\left|\gap_{P_i}(x)-c(P_i)\cdot \range_P(x)\right| > 4\varepsilon.$
% 	\item For $1\leq i<j$, we have the stronger statement $\gap_{P_i}(x)-c(P_i)\cdot \range_P(x)\leq -4\varepsilon$.
% 	\item We have $\gap_{P_j}(x)-c(P_j)\cdot \range_P(x)\geq 4\varepsilon.$
% \end{enumerate}

% \end{lem}

% \begin{proof}

% This follows straightforwardly from examining Algorithm~\ref{alg:partition}.

% \end{proof}

% \begin{lem}

% If $\varepsilon'<\varepsilon$ then $\Skel_{\varepsilon'}\subseteq\Skel_{\varepsilon}$ and in fact for any $P\in\mathcal T_{K,m}$ we have $\C_{\varepsilon'}\supseteq \C_{\varepsilon}$.

% \end{lem}

% \begin{proof}
% \label{lem:skelshrink}
% Clearly the latter statement even when specialized to the leaves implies the former, so we focus on the cell containment. If the conditions of Lemma~\ref{lem:crit} hold for $\varepsilon$ then they continue to hold for any $0<\varepsilon'<\varepsilon$. Hence applying Lemma~\ref{lem:crit} sequentially for any $x\in \C_{P,c,\varepsilon}$ and all $Q\prec P$ shows that $x\in \C_{P,c,\varepsilon'}$. 

% \end{proof}

We can now describe the strategy. We let $n^X_t(i)$ be the number of times player $X\in [m]$ sampled arm $i\in [K]$ in the first $t-1$ time steps, and let $r^X_t(i)\leq n^X_t(i)$ be the amount of reward observed so far. We let $q^X_t(i)$ be the empirical estimate of $p(i)$ by player $X$ at the start of time $t$, defined by 
\[q^X_t(i)=\frac{r^X_t(i)}{n^X_t(i)}\in [0,1].\]
For the first $T_0=10^9K\log(KT)$ time-steps, we simply have player $X$ play arm $X+t\pmod{K}$ at time $t$. After that, at time $t>T_0$ the players as before play via the mapping $\mathcal P_{c,\varepsilon_t}$, i.e. player $X$ plays arm $f_X \left( \mathcal{P}_{c, \varepsilon_t} \left( \mathbf{q}^X_t\right) \right)$, where $F=\left( f_1, \dots, f_m \right)$ is our $\pi_t$-random coloring, $c$ is a uniform function of the distance to the root as in Section~\ref{sec:fullinfo}, and
\[\varepsilon_t =10000\sqrt{\frac{K^3\log(KT)}{t}}.\]

We now begin the analysis. We define the events:
\begin{align*}
\Omega_1&=\left\{ \forall t\geq T_0,i\in [K], X\in [m], \text{ we have } q_t^X(i)-p(i)|<\frac{\varepsilon_{n_t^X(i)}}{100K^{3/2}} \right\},\\
\Omega_2&=\Big\{ \forall t\geq T_0, X\in [m], \forall P \in \mathcal{T}_{K,m}, \text{ if }\forall s\leq t \text{ we have } \mathcal{P}_{c,\varepsilon_s}(\q_s^X)\preceq P, \\
&\quad\quad\quad\text{ then } \forall i\in A(P)\cup B(P), \, n_t^X(i)\geq \left\lfloor \frac{t}{2K} \right\rfloor \Big\},\\
\Omega&=\{\Omega_1\text{ and } \Omega_2\}.
\end{align*}

We observe that \[\frac{\varepsilon_{t/(2K)}}{100K^{3/2}}\leq \frac{\varepsilon_t}{10K}.\] This means that when the conclusions of $\Omega$ hold, we have
\[|q_t^X(i)-p(i)|<\frac{\varepsilon_t}{10K}.\]

\begin{lem}
Using the above strategy, for any choice of $\left( C(h) \right)$ and any $\p\in [0,1]^K$, we have:
\[\mathbb P[\Omega]\geq 1-\frac{1}{T}.\]
\end{lem}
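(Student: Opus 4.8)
The plan is to bound $\mathbb{P}[\Omega_1^c]$ and $\mathbb{P}[\Omega_2^c]$ separately and then union bound. For $\Omega_1$, this is a routine concentration argument: fix a player $X$, an arm $i$, and a number of samples $n$; conditionally on the locations of the first $n$ pulls of arm $i$ by player $X$, the empirical mean $q^X$ over those pulls is an average of $n$ independent Bernoulli$(p(i))$ variables, so by Hoeffding $\mathbb{P}(|q^X - p(i)| \geq u) \leq 2e^{-2nu^2}$. Taking $u = \frac{\varepsilon_n}{100 K^{3/2}}$ and recalling $\varepsilon_n = 10000\sqrt{K^3 \log(KT)/n}$, the exponent becomes a large constant times $\log(KT)$, which beats a union bound over the $\leq mKT$ triples $(t,i,X)$ (more precisely one unions over $n \in [T]$ and over $i,X$, and then notes $n_t^X(i)$ is one of these values). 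The generous constants are chosen precisely so this goes through; I would just remark that $10^9$-type constants make the exponent dominate.

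The substantive part is $\Omega_2$, which says: if player $X$ has never left the subtree below $P$ up to time $t$, then every arm in $A(P)\cup B(P)$ has been pulled at least $\lfloor t/(2K)\rfloor$ times. The key structural fact to extract from Algorithm~\ref{alg:partition} is that whenever $\mathcal P_{c,\varepsilon_s}(\q_s^X) \preceq P$, the color $f_X(\mathcal P_{c,\varepsilon_s}(\q_s^X))$ — i.e. the arm player $X$ plays at time $s$ — lies in $A(Q)\cup B(Q)$ for the output node $Q \preceq P$, hence in $A(P)\cup B(P)$ whenever $A(P)\cup B(P) \supseteq A(Q)\cup B(Q)$; one should check that being an ancestor in $\mathcal T_{K,m}$ indeed gives $A(Q)\cup B(Q) \subseteq A(P)\cup B(P)$ (descending the tree only refines within $B$, and $A$ only grows by absorbing part of the old $B$, while the new $B$ is a subset of the old $B$, so the union is monotone along root-to-leaf paths — this needs a short verification). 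Thus, conditional on the event ``$\mathcal P_{c,\varepsilon_s}(\q_s^X)\preceq P$ for all $s\le t$'', player $X$'s pulls during $(T_0,t]$ all fall in $A(P)\cup B(P)$, a set of size exactly $m-|A(P)| + |B(P)|$... but more importantly during the first $T_0$ rounds player $X$ cycled through all $K$ arms in round-robin, so each arm was pulled $\geq \lfloor T_0/K\rfloor - 1$ times by time $T_0$; and then I need that from $T_0$ to $t$ the pulls among $A(P)\cup B(P)$ are spread out roughly evenly.

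Here is the mechanism I expect the authors use for that even-spreading. At any time $s > T_0$ with $\mathcal P_{c,\varepsilon_s}(\q_s^X)\preceq P$, the arm pulled is $f_X$ of the output node, and by the randomization of the coloring (via $\pi_s$) described just before the lemma, this arm is a uniformly random element of $A(P)\cup B(P)$ conditioned appropriately — in fact the statement says each $i\in B(P)$ is hit with probability $\geq 1/K$, and each $i\in A(P)$ with probability $1$ since $A(P)$ is always contained in the color. So whenever player $X$ is ``below $P$'', each arm in $A(P)\cup B(P)$ gets pulled with conditional probability $\geq 1/K$ at that step, independently across steps given the $\sigma$-field generated by everything except $\pi_s$. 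A Chernoff/Azuma bound on this (super)martingale then shows that after $t - T_0 \geq t/2$ such steps, with probability $\geq 1 - (mKT)^{-2}$ say, each such arm has been pulled at least $\frac{1}{2}\cdot\frac{1}{K}\cdot\frac{t}{2} \geq \lfloor t/(2K)\rfloor$ times (adjusting constants; using $t > T_0 = 10^9 K\log(KT)$ to absorb lower-order terms and the $-1$ from the floor). A union bound over $X \in [m]$, $P \in \mathcal T_{K,m}$ (finitely many), and $t \in [T]$ finishes it. The main obstacle is getting the conditioning right: one must argue that ``staying below $P$ up to time $t$'' does not destroy the conditional uniformity of $\pi_s$ at each step $s\le t$ — which holds because the event $\{\mathcal P_{c,\varepsilon_s}(\q_s^X)\preceq P\ \forall s\}$ is determined by the $\q_s^X$, hence by past rewards and past permutations $\pi_{s'}$, $s'<s$, and by $(C(h))_h$, but not by $\pi_s$ itself beyond the fact that the realized color stayed $\preceq P$ — so one should condition on the filtration $\mathcal F_s$ generated by all rewards and all $\pi_{s'}$ with $s' < s$, observe that the event of interest up to time $s-1$ is $\mathcal F_s$-measurable, and run the martingale argument for the indicator that arm $i$ is pulled at time $s$ on the event $\{$still below $P$ at time $s\}$. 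Everything else is bookkeeping with the deliberately oversized constants.
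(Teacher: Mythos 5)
Your overall route is the same as the paper's: Hoeffding plus a union bound for $\Omega_1$, and for $\Omega_2$ the observation that, conditionally on everything that happened before, the $\pi_s$-randomized coloring makes player $X$ pull any fixed arm $i\in A(P)\cup B(P)$ with probability at least $1/K$ at each step $s>T_0$ on which the output node is an ancestor of $P$, followed by a Chernoff/Binomial-domination bound, a union bound, and the round-robin phase for small $t$. Two points in your write-up need repair, though both are fixable with ingredients you already have. First, the monotonicity along the tree is stated backwards: descending $\mathcal T_{K,m}$, the set $A\cup B$ shrinks, so for an output node $Q\preceq P$ one has $A(Q)\cup B(Q)\supseteq A(P)\cup B(P)$, not the reverse; consequently your claim that all of player $X$'s pulls land in $A(P)\cup B(P)$ is false in general. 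This is harmless only because it is unused: what the $\geq 1/K$ step actually requires is the correct-direction inclusion, namely that the fixed arm $i\in A(P)\cup B(P)$ still lies in $A(Q_s)\cup B(Q_s)$ at the current output node and hence can appear in the color. (Relatedly, for $i\in A(P)$ the color contains $i$ with probability $1$, but player $X$ plays it only with probability $\geq 1/K$; since you only use $1/K$, this is a phrasing issue.)

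Second, your closing count does not reach the stated threshold: $\tfrac12\cdot\tfrac1K\cdot\tfrac t2=\tfrac{t}{4K}$ is not $\geq\lfloor t/(2K)\rfloor$, and the shortfall is of order $t/(4K)$, which is not a lower-order term that the oversized $T_0$ or the floor can absorb. The paper's accounting avoids this: with high probability the post-$T_0$ pulls of arm $i$ are at least half the Binomial mean, i.e.\ $(t-T_0)/(2K)$, and adding the roughly $T_0/K$ pulls from the round-robin phase covers the remaining deficit $T_0/(2K)$, giving $n_t^X(i)\geq t/(2K)$ (the paper splits at $t\geq 3T_0/2$, with smaller $t$ handled by the initial phase alone, which also repairs your use of $t-T_0\geq t/2$). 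Alternatively one could weaken the threshold in $\Omega_2$ to $t/(4K)$, and the downstream inequality $\varepsilon_{t/(2K)}/(100K^{3/2})\leq\varepsilon_t/(10K)$ has enough slack to survive, but then you would be proving a different $\Omega_2$ than the one defined. With these two corrections your argument coincides with the paper's proof.
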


\begin{proof}

% \ms{Update proof in light of the initial deterministically uniform exploring. State the version of Hoeffding we use.}

% From Hoeffding's inequality it immediately follows that $\mathbb P[\Omega_1]\geq 1-\frac{1}{2T}$. For $\Omega_2$ the same holds because under the conditions in $\Omega_2$, any $i\in \rel_m(P)\cup\inset(P)$ has probability at least $\frac{1}{K}$ to be played by each player $X$ at each time $s\leq t$, conditioned on everything before time $s-1$. Note that for $\Omega_2$, although the condition is phrased as depending on an unknown $P\in\mathcal T_{K,m}$ we only need to apply Hoeffding's inequality $TK$ times, once for each arm and each timestep. Indeed from the point of view of each arm $i$, as long as $i$ has a chance to be played in each round we simply compare to a $Bin\left(t,\frac{1}{2K}\right)$ variable and the details of which skeleton region $q_t^X$ lies in are irrelevant.

We show that each of $\Omega_1,\Omega_2$ have probability at least $1-\frac{1}{2T}$. For $\Omega_1$ this follows immediately from Hoeffding's inequality.

We now show $\mathbb P[\Omega_2]\geq 1-\frac{1}{2T}$. This is where we will use the randomization of the coloring using the $\pi_t$ to explore evenly. For $t$ small, we use the initial sampling phase (this is the only place we use the initial phase). Indeed, for $t\leq \frac{3T_0}{2}$ the inequality $n_t^X(i)\geq \frac{t}{2K}$ is immediate given our initial $T_0$ rounds of perfectly uniform sampling. Now, fix $t\geq \frac{3T_0}{2}$, $X\in [m]$, and $i\in [K]$. Let $E_{t,X}(i)$ denote the event that there is $P$ for which $i \in A(P) \cup B(P)$ and, for all $s \leq t$, we have $\mathcal{P}_{c,\varepsilon_t}(\q_s^X) \preceq P$. Since we use a uniform random permutation $\pi_t$ at each time $s \leq t$, conditionally on everything that happened before, the probability for $X$ to play $i$ is at least $\frac{1}{K}$. It follows that
\[\mathbb P\left( E_{t,X}(i) \text{ but } n_t^X(i)<\frac{t}{2K}\right)\leq \mathbb P\left( Bin\left(t-T_0,\frac{1}{K}\right)\leq \frac{t-T_0}{2K}\right).\]
As $t-T_0\geq 1000K\log(KT)$, the right hand probability is at most $\frac{1}{2mKT^2}$ by applying the multiplicative Chernoff estimate $\mathbb P \left( Bin(N,p)\leq \frac{Np}{2} \right)	\leq e^{-\frac{Np}{8}}$ in \cite[Theorem 4.5]{mitzenmacher2017probability}. Union bounding over all $t,X,i$ concludes the proof.
\end{proof}

From now on, our reasoning will be deterministic. As time increases and we get further in the tree $\mathcal{T}_{K,m}$, we need to have a reasonable estimate of $\p$ to know which arms to keep exploring, but we also need to explore the right arms to have a good estimate (and avoid collisions). Therefore, we will use a ``bootstrap" argument as in~\cite{BB20}, proving a certain property $\Gamma_t$ by induction on $t$.

To define $\Gamma_t$, we first need a few definitions. For $k\geq 0$, let $\tau_k$ be the first time $t>T_0$ that $\mathcal P_{c,\varepsilon_t}(\q_t^X)\succeq P^k$ holds for some $P^k$ of depth $k=d(\ROOT,P^k)$ and some $X^k\in [m]$ (in particular $\tau_0=T_0+1$). Throughout this section, we will keep denoting by $P^k$ the corresponding DOP, and by $X^k\in [m]$ the corresponding player (choosing an arbitrary pair $(P^k,X^k)$ if there are multiple instances at time $\tau_k$). We let $k_t=\max\{k:\tau_k<t\}$. We now introduce the event $\Gamma_t$ for each $t> T_0$.

\begin{defn}
For $t>T_0$, we denote by $\Gamma_t$ the event
\[\Gamma_t=\{\Gamma_t^1\text{ and }\Gamma_t^2\text{ and }\Gamma_t^3\},\] where $\Gamma_t^1,\Gamma_t^2,\Gamma_t^3$ are the three events:%\footnote{\ms{I tried to separate $r\leq t$ and $s\geq \tau_k$.}}
\begin{enumerate}
	\item \label{it:nocoll}For all $r\leq t$ and $X,Y\in [m]$ we have
\[d_{\mathcal T_{K,m}}(\mathcal P_{c,\varepsilon_r}(\q_r^X), \mathcal P_{c,\varepsilon_r}(\q_r^Y))\leq 1.\]
In particular, there is no collision in the first $t$ time-steps.
	\item \label{it:nosib} For any $k,s \geq 0$ satisfying $\tau_k\leq \min(s,t)$ and any $Q^k\neq P^k$ a sibling of $P^k$, we have
\[\mathcal P_{c,\varepsilon_s}(\q^Y_s)\not\succeq Q^k.\]
\item \label{it:accurate} For any $k \geq 1$ and $s \geq 0$ satisfying $\tau_k\leq \min(s,t)$ and for any $i\in A(P^{k-1})\cup B(P^{k-1})$, we have
\[|q_s^Y(i)-p(i)|\leq \frac{\varepsilon_{\tau_k}}{10K}.\]
\end{enumerate}
\end{defn}

Our goal is to show that $\Omega$ implies $\Gamma_T$ (which of course subsumes $\Gamma_t$ for all $t\leq T$). We will prove that $\Gamma_t$ holds inductively in $t$ below in Proposition~\ref{lem:banditmain}. We first state a simple consequence of $\Gamma_t$ which is useful in proving Proposition~\ref{lem:banditmain}.

\begin{lem} \label{prop:gammaprep}
Let $t>T_0$, and suppose that $\Gamma_{t-1}$ holds. Then:
\begin{enumerate}
\item\label{it:chain} \[\ROOT=P^0\prec P^1\prec\dots\prec P^{k_t}.\]
\item\label{it:hist} For any $r\leq t, X\in [m]$, there is $k_r^X \leq k_{t+1}$ such that $\mathcal P_{c,\varepsilon_r}(\q_r^X)=P^{k_r^X}$.
%\footnote{\ms{Is it confusing that $k_{r,X},k_t$ have totally different definitions? Maybe there is a better notation for one of them...}\tb{On the same line, I changed $k_t$ to $k_{t+1}$. This is what we need later.}}
\item\label{it:future} If $\tau_{k_{t+1}}=t$ holds, then $P^{k_{t+1}}\in \CHILDREN(P^{k_t})$.
\end{enumerate}
\end{lem}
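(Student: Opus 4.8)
The plan is to extract the three items from the structure already encoded in $\Gamma_{t-1}$, mostly from items~\ref{it:nosib} and~\ref{it:nocoll}, together with the definitions of $\tau_k$, $P^k$, $k_t$. For item~\ref{it:chain}, I would argue that $P^k \prec P^{k+1}$ for each $k < k_t$. By definition of $\tau_{k+1}$, there is a player $X^{k+1}$ with $\mathcal P_{c,\varepsilon_{\tau_{k+1}}}(\q_{\tau_{k+1}}^{X^{k+1}}) \succeq P^{k+1}$, where $P^{k+1}$ has depth $k+1$. I want to show its depth-$k$ ancestor is exactly $P^k$. Since $\tau_k \le \tau_{k+1} \le t-1$, item~\ref{it:nosib} of $\Gamma_{t-1}$ (applied with $s = \tau_{k+1}$, $Y = X^{k+1}$) forbids $\mathcal P_{c,\varepsilon_{\tau_{k+1}}}(\q_{\tau_{k+1}}^{X^{k+1}})$ from lying below any sibling $Q^k \ne P^k$ of $P^k$; but the depth-$k$ ancestor of $\mathcal P_{c,\varepsilon_{\tau_{k+1}}}(\q_{\tau_{k+1}}^{X^{k+1}})$ is \emph{some} node of depth $k$, and the only depth-$k$ node that is not a sibling of $P^k$ (other than $P^k$ itself, since all depth-$k$ nodes on the relevant path from the root share the same parent once we know the depth-$(k-1)$ ancestor is $P^{k-1}$ — here one induces downward from $k=0$ where $P^0 = \ROOT$ is forced) is $P^k$ itself. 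Hence the depth-$k$ ancestor is $P^k$, i.e. $P^k \prec P^{k+1}$. Chaining gives $\ROOT = P^0 \prec P^1 \prec \dots \prec P^{k_t}$.

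For item~\ref{it:hist}, fix $r \le t$ and $X \in [m]$ and set $R := \mathcal P_{c,\varepsilon_r}(\q_r^X)$, a node of some depth $k$. I must show $R = P^k$ and $k \le k_{t+1}$. That $k \le k_{t+1}$ is immediate: by definition $\tau_k$ is the first time $>T_0$ at which \emph{some} player reaches a depth-$k$ node, so the existence of the pair $(X, R)$ at time $r \le t$ gives $\tau_k \le r \le t$, whence $\tau_k < t+1$ and $k \le k_{t+1}$. To see $R = P^k$: we have $\tau_k \le r \le t-1$ (the edge case $r = t$ needs a word — but if $r=t$ and $R$ has depth $k$, then either $\tau_k < t$, handled below, or $\tau_k = t$, in which case $k = k_{t+1}$, $\tau_{k_{t+1}} = t$, and we are in the situation of item~\ref{it:future}, so I would treat item~\ref{it:future} first and feed it back here, or simply note $R$ is then a valid choice of $P^{k}$). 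Assuming $\tau_k \le t-1$: apply item~\ref{it:nosib} of $\Gamma_{t-1}$ with this $k$, with $s = r$ and $Y = X$, to conclude $R \not\succeq Q^k$ for every sibling $Q^k \ne P^k$ of $P^k$. Since $R$ has depth exactly $k$ and its depth-$(k-1)$ ancestor equals $P^{k-1}$ (induction on $k$ using item~\ref{it:chain} and the fact that the depth-$(k-1)$ ancestor of $R$ is reached at some time $\le r$, hence at least by time $\tau_{k-1}$, and by the same sibling argument must be $P^{k-1}$), $R$ is either $P^k$ or a sibling of $P^k$; the former is forced.

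For item~\ref{it:future}, suppose $\tau_{k_{t+1}} = t$; write $k = k_{t+1}$. By definition of $\tau_k$ there is a player $Y$ with $\mathcal P_{c,\varepsilon_t}(\q_t^{Y}) \succeq P^{k}$, and $P^k$ has depth $k$, so its parent has depth $k-1$. By item~\ref{it:hist} applied at time $t$ — or directly by the sibling argument of the previous paragraph, now noting $\tau_{k-1} < \tau_k = t$ so $\tau_{k-1} \le t-1$ and item~\ref{it:nosib} of $\Gamma_{t-1}$ applies — the depth-$(k-1)$ ancestor of $\mathcal P_{c,\varepsilon_t}(\q_t^{Y})$ is $P^{k-1}$, hence $\PARENT(P^k) = P^{k-1}$, i.e. $P^{k} \in \CHILDREN(P^{k-1}) = \CHILDREN(P^{k_t})$ (using $k_t = k_{t+1} - 1 = k-1$, which holds because $\tau_{k-1} < t = \tau_k$ means exactly $k-1$ of the $\tau_j$ are $< t$).

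The main obstacle is bookkeeping the off-by-one issues around $r = t$ and around whether $k_t = k_{t+1}$ or $k_t = k_{t+1}-1$, and making the downward induction establishing ``the depth-$k$ ancestor of any node reached by time $\tau_{k}$-or-later is $P^k$'' fully rigorous: this is the statement that really does all the work, and everything else is repackaging. It follows cleanly from item~\ref{it:nosib} of $\Gamma_{t-1}$ once one observes that a node of depth $\ge k$ reached at time $s$ was, in particular, reached (via its depth-$k$ ancestor) by a player at time $s$, and that $\tau_k \le s$ forces that ancestor to avoid all siblings of $P^k$; I would state and prove this auxiliary claim first and then deduce all three items from it.
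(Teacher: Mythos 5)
Your proposal is correct and takes essentially the same approach as the paper: the paper's entire proof is the single remark that all three items follow immediately from $\Gamma^2_{t-1}$, and your auxiliary claim (the depth-$k$ ancestor of any node reached at a time $s\geq\tau_k$ with $\tau_k\leq t-1$ must be $P^k$), proved by the downward induction on depth using the no-sibling property, is exactly the elaboration of that one-liner. The off-by-one bookkeeping you flag (the case $r=t$ with $\tau_k=t$, and whether $k_{t+1}=k_t+1$) is not treated in the paper either, so your handling does not diverge from its argument.
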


\begin{proof}
All three items follow immediately from $\Gamma^2_{t-1}$.
\end{proof}

\begin{prop}\label{lem:banditmain}
If $\Omega$ holds, then $\Gamma_T$ also holds.
\end{prop}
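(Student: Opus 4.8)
The plan is to prove $\Gamma_t$ by induction on $t>T_0$, with base case $t=T_0+1$ being essentially immediate (only the initial sampling phase has occurred, all empirical estimates are well-controlled on $\Omega_1$ after $T_0$ uniform rounds, and $k_t=0$ so $\Gamma_t^2,\Gamma_t^3$ are trivial or follow from $\Omega_2$). For the inductive step, I would assume $\Gamma_{t-1}$ and $\Omega$, and invoke Lemma~\ref{prop:gammaprep} to get the chain $\ROOT=P^0\prec P^1\prec\dots\prec P^{k_t}$ and the fact that every $\mathcal P_{c,\varepsilon_r}(\q_r^X)$ for $r\le t$ lies at one of the $P^{k}$'s (plus possibly $P^{k_{t+1}}\in\CHILDREN(P^{k_t})$ if a new level is reached exactly at time $t$). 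The three sub-events $\Gamma_t^1,\Gamma_t^2,\Gamma_t^3$ must then each be extended from ``$\le t-1$'' to ``$\le t$'', so the only genuinely new content concerns time-step $t$ itself (and, for $\Gamma_t^2,\Gamma_t^3$, possibly a newly-born $(P^{k_{t+1}},X^{k_{t+1}})$).

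For $\Gamma_t^3$ (accuracy): if $\tau_k\le t$ with $k\le k_t$ this is inherited from $\Gamma_{t-1}^3$; the only new case is $k=k_{t+1}$ with $\tau_{k_{t+1}}=t$. Here $P^{k-1}=P^{k_t}$, and I need $|q_t^Y(i)-p(i)|\le \varepsilon_{\tau_{k_{t+1}}}/(10K)=\varepsilon_t/(10K)$ for $i\in A(P^{k_t})\cup B(P^{k_t})$. The key is that since (by $\Gamma_{t-1}^2$ and Lemma~\ref{prop:gammaprep}) every $\mathcal P_{c,\varepsilon_s}(\q_s^Y)$ for $s\le t$ satisfies $\preceq P^{k_t}$ — because $P^{k_t}$ is the deepest node reached before $t$ and no sibling of any $P^k$ is reached — the event defining $\Omega_2$ applies with $P=P^{k_t}$, giving $n_t^Y(i)\ge \lfloor t/(2K)\rfloor$, and then $\Omega_1$ together with the displayed inequality $\varepsilon_{t/(2K)}/(100K^{3/2})\le \varepsilon_t/(10K)$ finishes it. For $\Gamma_t^2$ (no siblings reached): again only time $t$ is new, and I must rule out $\mathcal P_{c,\varepsilon_t}(\q_t^Y)\succeq Q^k$ for any sibling $Q^k\ne P^k$ of some $P^k$ with $\tau_k\le t$. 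The point is that by $\Gamma_t^3$ (just proved) all relevant players' estimates at time $t$ are within $\varepsilon_t/(10K)$ of $\p$, hence within $\varepsilon_t/(5K)\le\varepsilon_t$ of each other and of the estimate of the player $X^k$ who first reached $P^k$; then apply part~2 of Lemma~\ref{lem:topology} at the node $\PARENT(P^k)=\PARENT(Q^k)$, with the accuracy hypothesis $|x(i)-y(i)|\le\varepsilon$ on $A\cup B$ of that parent supplied by $\Gamma_t^3$ — this forbids two players' outputs from lying below two distinct children of that parent, so nobody can descend into the sibling $Q^k$. Finally $\Gamma_t^1$ (no collision at time $t$): by $\Gamma_t^2$ all players at time $t$ are at nodes $\preceq P^{k_t}$ or in $\CHILDREN(P^{k_t})$, which are all within tree-distance $1$ of each other; more robustly, since all estimates are within $\varepsilon_t$ of $\p$ (hence within $\varepsilon_t$ of each other) part~1 of Lemma~\ref{lem:topology} directly gives $d_{\mathcal T_{K,m}}\le 1$ between any two players' outputs, and then collision-robustness of the $\pi_t$-coloring (Lemma~\ref{lem:color}) gives no collision.

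The main obstacle I anticipate is the bootstrap circularity at the newly-created level: to show $\Gamma_t^3$ for $k=k_{t+1}$ I want to apply $\Omega_2$ with $P=P^{k_t}$, which requires knowing that every player's estimate stayed $\preceq P^{k_t}$ for all $s\le t$ — but that is exactly the content of $\Gamma_{t-1}^2$ plus $\Gamma_t^2$ at time $t$, and $\Gamma_t^2$ at time $t$ was argued using $\Gamma_t^3$. The resolution is to order the arguments carefully: first prove that all players' time-$t$ estimates are $\preceq P^{k_t}$ \emph{or} in $\CHILDREN(P^{k_t})$ using only $\Gamma_{t-1}$ and Lemma~\ref{prop:gammaprep}, then derive $\Gamma_t^3$ (which only needs $\preceq P^{k_t}$-containment for $s\le t$, and this is fine because being in a child of $P^{k_t}$ still means $\succeq$ is false for any sibling of $P^{k_t}$ and doesn't affect $\Omega_2$'s hypothesis with $P=P^{k_t}$ once we note the relevant $P$ in $\Omega_2$ can be taken to be $P^{k_t}$ itself whenever no strict descent past $P^{k_t}$ into a \emph{sibling direction} happened), then $\Gamma_t^2$, then $\Gamma_t^1$. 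A second, more routine obstacle is bookkeeping the various $\varepsilon_s$ versus $\varepsilon_{\tau_k}$ versus $\varepsilon_{n_t^X(i)}$ and checking the monotonicity/ratio inequalities (like $\varepsilon_{t/(2K)}/(100K^{3/2})\le\varepsilon_t/(10K)$, already recorded) go through with the chosen constants; these are mechanical given the explicit form $\varepsilon_t\propto\sqrt{K^3\log(KT)/t}$.
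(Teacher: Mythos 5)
Your induction scheme and your treatment of $\Gamma_t^3$ and $\Gamma_t^2$ track the paper's proof closely (the paper also proves $\Gamma_t^3$ first via $\Omega_2$ and Lemma~\ref{prop:gammaprep}, and gets $\Gamma_t^2$ from part 2 of Lemma~\ref{lem:topology} applied at $P^{k-1}$ with $\x=\q^{X^k}_{\tau_k}$, $\y=\q^Y_s$, $\varepsilon=\varepsilon_{\tau_k}$, $\varepsilon'=\varepsilon_s$ — note the correct scale there is $\varepsilon_{\tau_k}/(10K)$, not $\varepsilon_t/(10K)$, which is exactly what $\Gamma_t^3$ supplies and what the lemma needs). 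The genuine gap is in your argument for $\Gamma_t^1$. Your first route — ``by $\Gamma_t^2$ all players at time $t$ are at nodes $\preceq P^{k_t}$ or in $\CHILDREN(P^{k_t})$, which are all within tree-distance $1$ of each other'' — is false: those nodes form a chain of length $k_t$ plus a child, and two players could a priori sit at very different depths along that chain; showing that the depths differ by at most one is precisely the hard content of $\Gamma_t^1$, not a consequence of $\Gamma_t^2$. Your fallback route — all estimates are within $\varepsilon_t$ of $\p$, so part 1 of Lemma~\ref{lem:topology} applies — also fails in the bandit setting: the hypothesis of part 1 is an $\ell^\infty$ bound over \emph{all} $K$ coordinates, but arms that were discarded at an earlier level $k$ stop being sampled, so their empirical error remains of order $\varepsilon_{n_t^X(i)}\approx\varepsilon_{\tau_k}$, which does not shrink to $\varepsilon_t$; even on still-relevant coordinates $\Gamma_t^3$ only gives accuracy at scale $\varepsilon_{\tau_k}/(10K)$ for the level at which they entered. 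This is exactly the difficulty the paper flags before Lemma~\ref{lem:modify}, and it is why collision-freeness cannot be reduced to the full-information argument.

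What the paper does instead, and what your proposal is missing, is a quantitative argument along the realized chain: for each $k\leq k_t$, the fact that the run of $X^k$ at time $\tau_k$ did not stop at $P^{k-1}$ gives the margin \eqref{eq:biggap}; transferring it to time $t$ via $\Gamma_t^3$ (which only involves coordinates in $A(P^{k-1})\cup B(P^{k-1})$, so staleness of discarded arms is irrelevant) yields \eqref{eq:stillgap}--\eqref{eq:relerror}, i.e.\ all players see each interface quantity with relative error at most $\frac{1}{2K}$. Combined with the depth-dependent widening thresholds $(d_{\mathcal T_{K,m}}(P,Q)+1)\cdot 6\varepsilon$ in line~\ref{line:newskel} of Algorithm~\ref{alg:partition} (the ``coating''), this shows that if player $Y$ stops at depth $k_t^Y$ because of some interface $Q$, then player $Z$ stops by depth $k_t^Y+1$ because of the same $Q$, whence $d_{\mathcal T_{K,m}}(\mathcal P_{c,\varepsilon_t}(\q^Y_t),\mathcal P_{c,\varepsilon_t}(\q^Z_t))\leq 1$. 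Without this step (or a substitute for it), your proof of $\Gamma_t^1$ does not go through.
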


\begin{proof}
We prove $\Gamma_t$ holds for all $t\geq T_0$ by induction on $t$. First note that $\Gamma^1_{T_0}$ and $\Gamma^2_{T_0}$ follow from the definition of the initial exploration phase, and $\Gamma^3_{T_0}$ is an empty statement, since $\tau_1>T_0$ by definition.

Hence, we now assume $\Omega$ and $\Gamma_{t-1}$ and prove $\Gamma_t^1,\Gamma_t^2,\Gamma_t^3$.

\paragraph{Proof of $\Gamma_t^3$.}
We fix $k \geq 1$ and $s \geq 0$ satisfying $\tau_k\leq \min(s,t)$. Since the definition of $\Gamma_{t-1}^3$ does \emph{not} require that $s\leq t-1$, there is nothing to prove unless $\tau_{k_{t+1}}=t$. We therefore assume $\tau_{k_{t+1}}=t$. By Lemma~\ref{prop:gammaprep}, we have
\[\ROOT=P^0\prec P^1\prec\dots\prec P^{k_{t+1}}.\]
Moreover, for any any $Y\in [m]$, there is $k_t^Y\leq k_{t+1}$ such that $\mathcal{P}_{c,\varepsilon_t}(\q^Y_t)=P^{k_t^Y}$. Using the event $\Omega_2$, we obtain for any $i\in A(P^{k-1})\cup B(P^{k-1})$ the assertion of $\Gamma_t^3$, namely:
\[ \left| q_s^Y(i)-p(i) \right| \leq\frac{\varepsilon_{n_s^X(i)}}{100K^{3/2}}\leq \frac{\varepsilon_{\tau_k}}{10K}.\]

\paragraph{Proof of $\Gamma_t^1$.}
Since $\Gamma^1_{t-1}$ holds, we only need to prove the statement for $r=t$. Invoking Lemma~\ref{prop:gammaprep}, we have: \[\ROOT=P^0\prec P^1\prec\dots\prec P^{k_t}.\] Hence for each $k\leq k_t$, we have by the definition of Algorithm~\ref{alg:partition} (and more precisely the fact that we have not returned $P^{k-1}$ according to line \ref{line:newskel}):
\begin{equation}\label{eq:biggap}
\left|\gap_{P^k} \left( \q^{X^k}_{\tau_k} \right)-c(P^{k-1})\cdot \range_{P^{k-1}} \left( \q^{X^k}_{\tau_k} \right) \right|\geq 4\varepsilon_{\tau_k}.
\end{equation}
Applying $\Gamma_t^3$ (proved just above, and whose hypothesis holds for $s\leq t$ as we assume $\Gamma_{t-1}^1)$ implies that for $t\geq s\geq \tau_k$, $i\in A(P^{k-1})\cup B(P^{k-1})$ and any $Y\in [m]$ we have 
\begin{equation}\label{eq:accurate}
\left| q_s^Y(i)-p(i) \right| \leq \frac{\varepsilon_{\tau_k}}{10K}.
\end{equation}
Combining Equation~\eqref{eq:biggap} with Equation~\eqref{eq:accurate} for $(t,Y)$ and $(\tau_k,X^k)$ and using the triangle inequality, we deduce:
\begin{equation}\label{eq:stillgap}\left|\gap_{P^k}(\q^{Y}_{t})-c(P^{k-1})\cdot \range_{P^{k-1}}(\q^{Y}_{t})\right|\geq 2\varepsilon_{\tau_k}.\end{equation}
Applying Equation~\eqref{eq:accurate} for $(t,Y)$ and $(t,Z)$ with $Y,Z\in [m]$ and the same $k$, we obtain:
\begin{equation}\label{eq:samegap}
\left|\left(\gap_{P^k}(\q^{Y}_{t})-c(P^{k-1})\cdot \range_{P^{k-1}}(\q^{Y}_{t})\right)-\left(\gap_{P^k}(\q^{Z}_{t})-c(P^{k-1})\cdot \range_{P^{k-1}}(\q^{Z}_{t})\right)\right|\leq\frac{\varepsilon_{\tau_k}}{K}.
\end{equation}
Finally, combining Equations~\eqref{eq:stillgap} and~\eqref{eq:samegap}, we conclude:
\begin{equation}\label{eq:relerror}
\frac{\left|\left(\gap_{P^k}(\q^{Y}_{t})-c(P^{k-1})\cdot \range_{P^{k-1}}(\q^{Y}_{t})\right)-\left(\gap_{P^{k-1}}(\q^{Z}_{t})-c(P^{k-1})\cdot \range_{P^{k-1}}(\q^{Z}_{t})\right)\right|}{\left|\gap_{P^k}(\q^{Y}_{t})-c(P^{k-1})\cdot \range_{P^{k-1}}(\q^{Y}_{t})\right|}\leq \frac{1}{2K}.
\end{equation}

Therefore, for any $k,t$ with $\tau_k \leq t$ and $Y,Z\in [m]$, if 
\[\left|\gap_{P^k}(\q^{Y}_{s})-c(P^{k-1})\cdot \range_{P^{k-1}}(\q^{Y}_{s})\right|\leq 4d\varepsilon_s\]
holds for some $d\leq K+1$, then it follows that:
\[\left|\gap_{P^k}(\q^{Z}_{s})-c(P^{k-1})\cdot \range_{P^{k-1}}(\q^{Z}_{s})\right|\leq 4(d+1)\varepsilon_s.\]

We recall that $\mathcal P_{c,\varepsilon_t}(\q^Y_t)=P^{k_t^Y}$ for some $k_t^Y \leq k_{t+1}$. Using the description of Algorithm~\ref{alg:partition}, the last equations mean that if $\mathcal{P}_{c,\varepsilon_t}(\q_t^Y)=P^{k_t^Y}$ is decided because the condition of line~\ref{line:newskel} is fulfilled for $P=P^{k_t^Y}$ and $Q$, then the same condition will be fulfilled for $P=P^{k_t^Y+1}$ and the same $Q$ when we run the algorithm for $\q_t^Z$. Since $\mathcal P_{c,\varepsilon_t}(\q^Z_t)=P^{k_t^Z}$, this proves $k_t^Z \leq k_t^Y+1$. The reasoning is symmetric in $Y$ and $Z$, so we obtain $d_{\mathcal T_{K,m}}(\mathcal P_{c,\varepsilon_t}(\q^Z_t),\mathcal P_{c,\varepsilon_t}(\q^Y_t))\leq 1$. This concludes the inductive step for part~\ref{it:nocoll}.

\paragraph{Proof of $\Gamma_t^2$.}
We fix $(k,s,t)$ satisfying $\tau_k\leq \min(t,s)$. We apply the second item of Lemma~\ref{lem:topology} with $\x=\q^{X^k}_{\tau_k}$, $\y=\q^Y_s$, $\varepsilon=\varepsilon_{\tau_k}$, $\varepsilon'=\varepsilon_s$ and $P=P^{k-1}$. Note that the assumption $\left| q^{X^k}_{\tau_k}(i)-q^Y_s(i) \right| \leq \varepsilon_{\tau_k}$ for $i \in A(P^{k-1}) \cup B(P^{k-1})$ is satisfied by the event $\Gamma^3_t$ proved above. We obtain that $\mathcal{P}_{c,\varepsilon_{\tau_k}} \left( \q^{X^k}_{\tau_k} \right)=P^k$ and $\mathcal{P}_{c,\varepsilon_s} \left( \q_s^Y \right)$ cannot be descendants of two distinct children of $P^{k-1}$, which shows that $\Gamma^2_t$ holds. This completes our induction over $t$ and concludes the proof of Proposition~\ref{lem:banditmain}.

\end{proof}

Having established the fundamental properties on the behavior of our algorithm, we now turn to the regret analysis. Unlike the full-feedback scenario, the estimation error of a coordinate at time $t$ depends not on $\varepsilon_t$ but the potentially much larger $\varepsilon_{n_t^X(i)}$. We circumvent this issue in the next lemma by showing that the high-error coordinates can be set to their exactly correct values without affecting the result of the space partition.

\begin{lem}\label{lem:modify}
Let $X\in [m]$ be an arbitrary player. Let $t\geq T_0$, and let $k_*=k_{t/(4K^2)}=\max\{k:\tau_{k}<t/(4K^2)\}$. Define the modified probability estimate $\widetilde \q_t^X$ by:
\[\widetilde q_t^X(i)=\begin{cases} q_t^X(i), & i\in A(P^{k_*})\cup B(P^{k_*}),\\ p(i),& else.  \end{cases}\]
Then assuming $\Omega$ holds, we have 
\[\mathcal P_{c,\varepsilon_t}(\q_t^X)=\mathcal P_{c,\varepsilon_t}(\widetilde \q_t^X).\] 
\end{lem}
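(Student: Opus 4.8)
The plan is to show that running Algorithm~\ref{alg:partition} on $\q_t^X$ and on $\widetilde\q_t^X$ produces identical executions. The key point is that $\widetilde\q_t^X$ differs from $\q_t^X$ only on coordinates \emph{outside} $A(P^{k_*})\cup B(P^{k_*})$, and such coordinates should never be ``looked at'' by the algorithm before it reaches a descendant of $P^{k_*}$ — and by the time the algorithm would reach a node strictly below $P^{k_*}$, the relevant decisions only involve coordinates in $A(P)\cup B(P)$ for $P\succeq P^{k_*}$, which are contained in $B(P^{k_*})$ (since descending from $P^{k_*}$ only splits $B(P^{k_*})$, so $A(P)\cup B(P)\subseteq A(P^{k_*})\cup B(P^{k_*})$ for all $P\succeq P^{k_*}$). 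First I would verify this containment: for any $P$ with $P^{k_*}\preceq P$, one has $A(P)\cup B(P)\subseteq A(P^{k_*})\cup B(P^{k_*})$. This is essentially the definition of $\mathcal T_{K,m}$: descending the tree from $P^{k_*}$ only ever subdivides the block $B(P^{k_*})$, keeping $A(P^{k_*})$ and the ``already decided'' structure fixed, so the coordinates that matter at any descendant are a subset of $A(P^{k_*})\cup B(P^{k_*})$.

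Next I would argue that the run on $\q_t^X$ actually passes through $P^{k_*}$, and more importantly, that up to reaching $P^{k_*}$ (and also in deciding whether to stop at or below $P^{k_*}$), only the $\gap$ and $\range$ quantities at ancestors $Q\preceq P^{k_*}$ matter, each of which depends only on coordinates in $A(Q)\cup B(Q)\subseteq A(P^{k_*})\cup B(P^{k_*})$. Here is where I invoke the structural facts established earlier: under $\Omega$, by Proposition~\ref{lem:banditmain} we have $\Gamma_t$, hence $\Gamma_t^2$, which (via Lemma~\ref{prop:gammaprep}) gives the chain $\ROOT=P^0\prec\dots\prec P^{k_t}$ and forces $\mathcal P_{c,\varepsilon_t}(\q_t^X)=P^{k_t^X}$ for some $k_t^X\le k_{t+1}$. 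Since $t/(4K^2)<t$ (using $t\ge T_0$ large), we have $k_*\le k_t\le k_t^X$ or $k_*\le k_{t+1}$ as appropriate, so the run on $\q_t^X$ indeed reaches $P^{k_*}$; the remaining execution from $P^{k_*}$ onward touches only coordinates in $A(P^{k_*})\cup B(P^{k_*})$.

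Then I would run Algorithm~\ref{alg:partition} on $\widetilde\q_t^X$ and compare. Since $\widetilde q_t^X$ agrees with $q_t^X$ on $A(P^{k_*})\cup B(P^{k_*})$, and since the full run on $\q_t^X$ — from the root down to the output $P^{k_t^X}\succeq P^{k_*}$ (using the chain and that the output is a descendant of $P^{k_*}$, which follows because $\mathcal P_{c,\varepsilon_t}(\q_t^X)=P^{k_t^X}$ with $k_t^X\ge k_*$ by Lemma~\ref{prop:gammaprep} applied with the chain through $P^{k_*}$) — only ever evaluates $\gap_{Q_j}$ and $\range_Q$ for $Q\preceq P^{k_t^X}$, i.e. for $Q$ a descendant of nothing deeper than the output; and every such $Q$ satisfies $A(Q)\cup B(Q)\subseteq A(P^{k_*})\cup B(P^{k_*})$ either because $Q\preceq P^{k_*}$ (ancestors of $P^{k_*}$ have their relevant coordinate sets contained in $A(P^{k_*})\cup B(P^{k_*})$ — this needs a small check, namely that for an ancestor $Q$ of $P^{k_*}$ the sets $S_i,S_{i+1}$ cut at $Q$ lie inside $A(P^{k_*})\cup B(P^{k_*})$, which holds because those cuts are exactly the ones refined to produce $P^{k_*}$) or because $P^{k_*}\preceq Q\preceq P^{k_t^X}$. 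Hence at every conditional tested in lines~\ref{line:newskel} and~\ref{line:childineq}, the values are the same for $\q_t^X$ and $\widetilde\q_t^X$, so the two runs coincide step by step and produce the same output.

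\textbf{Main obstacle.} The delicate point is the claim that the run on $\q_t^X$ never evaluates $\gap$ or $\range$ at a node $Q$ whose relevant coordinate set escapes $A(P^{k_*})\cup B(P^{k_*})$ — in particular, one must be careful about the inner loop over \emph{all} ancestors $Q\preceq P$ in line~\ref{line:newskel}: as $P$ descends below $P^{k_*}$, the algorithm re-examines every ancestor $Q$, including $Q\prec P^{k_*}$, but for those the coordinate set $A(Q)\cup B(Q)$ is still contained in $A(P^{k_*})\cup B(P^{k_*})$ (ancestors have \emph{smaller} $A$ and their $B$ block contains $B(P^{k_*})$), so this is fine; the real content is just verifying this containment at both ancestors and descendants of $P^{k_*}$ and confirming — via $\Gamma_t^2$ and Lemma~\ref{prop:gammaprep} — that the output of the run genuinely lies at or below $P^{k_*}$ so that no deeper, ``forbidden'' coordinate is ever consulted. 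Once the containment $A(Q)\cup B(Q)\subseteq A(P^{k_*})\cup B(P^{k_*})$ for all $Q$ on the relevant path is nailed down, the rest is a routine ``identical execution'' induction.
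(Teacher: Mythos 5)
There is a genuine gap, and it sits exactly where you placed your ``main obstacle'': the containment you assert for ancestors is false. For $Q\prec P^{k_*}$ one has $A(Q)\subseteq A(P^{k_*})$ but $B(Q)\supseteq B(P^{k_*})$, and $B(Q)$ in general also contains coordinates that were \emph{discarded} from the top-$m$ race on the way down to $P^{k_*}$ (e.g. $A(\ROOT)\cup B(\ROOT)=[K]$). So the containment goes the wrong way: $A(Q)\cup B(Q)\supseteq A(P^{k_*})\cup B(P^{k_*})$, typically strictly, and the coordinates in the difference are precisely the ones where $\widetilde\q_t^X$ and $\q_t^X$ disagree. These coordinates \emph{are} read by the algorithm at every iteration: the loop in line~\ref{line:newskel} re-tests all children $Q_j$ of every ancestor $Q\preceq P$, and both $\gap_{Q_j}$ and $\range_Q$ at such ancestors depend on the discarded coordinates (concretely, with $K=4$, $m=2$ and $P^{k_*}=[\{1\}>_1\{2,3\}>_2\{4\}]$, the quantity $\range_{\ROOT}$ and the gaps at children of $[\{1\}>_1\{2,3,4\}]$ all involve coordinate $4$, which is modified). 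Hence the ``identical execution because the modified coordinates are never consulted'' argument collapses; note also that if your structural claim were true the lemma would hold with no use of $\Omega$, of $\Gamma_T$, or of the factor $4K^2$ in the definition of $k_*$, which is a sign that the actual content has been bypassed.

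What is actually needed (and what the paper's proof does) is quantitative: the two runs \emph{do} evaluate quantities that differ between $\q_t^X$ and $\widetilde\q_t^X$ at ancestors $Q=P^{k-1}$ with $k\le k_*$, but one shows these quantities are so far from the decision thresholds that the control flow cannot depend on them. Concretely, at time $\tau_k$ the cut at $P^{k-1}$ was made with margin at least $4\varepsilon_{\tau_k}$ (line~\ref{line:newskel} did not fire); by $\Gamma^3_T$ every coordinate in $A(P^{k-1})\cup B(P^{k-1})$ (including those about to be discarded) is estimated to accuracy $\varepsilon_{\tau_k}/(10K)$ at all later times, and $\widetilde q_t^X(i)=p(i)$ is within the same accuracy; finally $\tau_k< t/(4K^2)$ gives $\varepsilon_{\tau_k}\geq 2K\varepsilon_t$, so the margin at time $t$ is still at least $6K\varepsilon_t$, exceeding the largest possible threshold $(d+1)\cdot 6\varepsilon_t$ in line~\ref{line:newskel}, for both $\q_t^X$ and $\widetilde\q_t^X$. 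This is why the statement defines $k_*$ through the earlier time $t/(4K^2)$ and assumes $\Omega$ — ingredients your argument never invokes. Only after this does the ``same coordinates, same execution'' step apply, and only for the portion of the run at or below $P^{k_*}$, where the two estimates genuinely coincide on $A(P^{k_*})\cup B(P^{k_*})$.
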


\begin{proof}
Roughly speaking, the reason why this is true is that at each time-step, the arms that have not been explored a lot are not relevant to make further decisions. More precisely, let $k\leq k_*$. The idea will be to prove that
\begin{align*}\left|\gap_{P^k}(\q_t^X)-c(P^{k-1})\cdot \range_{P^{k-1}}(\q_t^X)\right|&\geq 6K\varepsilon_t,\\
 \left|\gap_{P^k}(\widetilde \q_t^X)-c(P^{k-1})\cdot \range_{P^{k-1}}(\widetilde \q_t^X)\right|&\geq 6K\varepsilon_t.
\end{align*}
Once we show this, it will follow that $\q_t^X,\widetilde \q_t^X$ behave identically at all stages of Algorithm~\ref{alg:partition} which involve a coordinate they differ on. The idea is simply that $\varepsilon_t$ is much smaller than $\varepsilon_{t/(4K^2)}$, so the interfaces shrank a lot between times $t/(4K^2)$ and $t$. Hence, if we were outside the interfaces at time $\tau_k<\frac{t}{4K^2}$, we are definitely still outside at time $t$. Indeed, by the definition of $\tau_k$ and Algorithm~\ref{alg:partition}, we have for some $Y\in [m]$:
\[\left|\gap_{P^k}(\q_{\tau_k}^Y)-c(P^{k-1})\cdot \range_{P^{k-1}}(\q_{\tau_k}^Y)\right|\geq 4\varepsilon_{\tau_k}.\] 
From here, applying $\Gamma_T^3$ and the triangle inequality shows that 
\begin{align*}\left|\gap_{P^k}(\q_{t}^X)-c(P^{k-1})\cdot \range_{P^{k-1}}(\q_{t}^X)\right|&> 3\varepsilon_{\tau_k} \geq 6K\varepsilon_t,\\
\left|\gap_{P^k}(\widetilde \q_{t}^X)-c(P^{k-1})\cdot \range_{P^{k-1}}(\widetilde \q_{t}^X)\right|&> 3\varepsilon_{\tau_k} \geq 6K \varepsilon_t.
\end{align*}
Note that the reason why the second inequality holds is that by definition, the quantities $\gap_{P^k}(\x)$ and $\range_{P^{k-1}}(\x)$ only depend on the coordinates $x(i)$ such that $i \in A(P^{k-1}) \cup B(P^{k-1})$. These are the coordinates for which $\Gamma^3_T$ provides an estimate.

Finally, we know from Propopsition~\ref{lem:banditmain} that $\mathcal P_{c,\varepsilon_t}(\q_t^X)=P^{k_t^X}$ for some $k_t^X$, and since the equations just above hold for all $k\leq k_*$ we conclude that $\mathcal P_{c,\varepsilon}(\q_t^X),\mathcal P_{c,\varepsilon}(\widetilde \q_t^X)\succeq P^{k_*}$ and moreover that line~\ref{line:newskel} of Algorithm~\ref{alg:partition} never comes into effect for $Q=P^k$ with $k<k_*$. We finally observe that $\q_t^X,\widetilde \q_t^X$ now exactly agree in all still-relevant coordinates $i\in A(P^{k_*})\cup B(P^{k_*})$ and hence end up in the same region at the end of Algorithm~\ref{alg:partition}.
\end{proof}

Lemma~\ref{lem:modify} will now imply the final regret bound.

\begin{thm}
The expected regret in the bandit case is 
\[O\left(mK^{11/2}\sqrt{T\log(T)}\right).\]
\end{thm}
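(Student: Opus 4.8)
The plan is to reduce, via Lemma~\ref{lem:modify}, to essentially the same computation that underlies Theorem~\ref{thm:goodpartition}, the only new cost being the factor $\Theta(K^{3/2})$ coming from the larger $\varepsilon_t$ used here. First I would split $\E[R_T]$ into three pieces. On the complement of $\Omega$ one has $R_T\le mT$ while $\mathbb P[\Omega^c]\le\frac1T$, contributing $O(m)$; the first $O(K^3\log(KT))$ time-steps (the initial exploration phase together with the short window where the argument below does not yet apply) contribute at most $O(mK^3\log(KT))$; both are dominated by the claimed bound, so it suffices to bound the reward loss at later times on $\Omega$. On $\Omega$, Proposition~\ref{lem:banditmain} gives $\Gamma_T$, so item~\ref{it:nocoll} of $\Gamma_T$ already supplies the no-collision conclusion and only the reward loss remains. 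By Lemma~\ref{lem:modify} we may replace, for each such $t$ and player $X$, the vector $\q_t^X$ by the modified vector $\widetilde\q_t^X$ without changing the partition element; and $\widetilde\q_t^X$ is $\varepsilon_t$-accurate in $\ell^\infty$: it equals $\p$ exactly off $A(P^{k_*})\cup B(P^{k_*})$, while on that set $\Omega_2$ applied at the earlier time $\lceil t/(4K^2)\rceil$ shows those coordinates have been sampled $\gtrsim t/K^3$ times, so $\Omega_1$ (whose $100K^{3/2}$ normalization is tuned to offset exactly this rate) yields $|\widetilde q_t^X(i)-p(i)|<\varepsilon_t$. Thus at these times we are exactly in the full-information situation, with the $\widetilde\q_t^X$ in the role of the estimates there.

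Next I would fix such a $t$, work on $\Omega$, and copy the per-step argument from the proof of Theorem~\ref{thm:goodpartition}. Let $P^0(\p)=\ROOT,P^1(\p),\dots$ be the path traversed by Algorithm~\ref{alg:partition} on input $\p$ with parameters $(c,\varepsilon_t)$, write $P_j^h(\p)$ for its children, and define the events $E_{h,j}$ exactly as there (with threshold $10K\varepsilon_t$). On $E_{h,j}$: item~\ref{it:guar3} of Lemma~\ref{lem:guarantee} shows $P^h(\p)$ has depth $h$; item~\ref{it:guar2} of that lemma, applicable because each $\widetilde\q_t^X$ lies within $\varepsilon_t$ of $\p$, shows every $\widetilde\q_t^X$ traverses the same path through $P^h(\p)$, so $\mathcal P_{c,\varepsilon_t}(\q_t^X)=\mathcal P_{c,\varepsilon_t}(\widetilde\q_t^X)$ is a descendant of $P^h(\p)$; and item~\ref{it:guar1}, together with collision-robustness of $F$ and item~\ref{it:nocoll} of $\Gamma_T$, shows the $m$ played arms are distinct, all lie in $A(P^h(\p))\cup B(P^h(\p))$, and contain all of $A(P^h(\p))$, all of which are among the top $m$ arms of $\p$ — so the per-step regret is at most $m\cdot\range_{P^h(\p)}(\p)$. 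Lemma~\ref{lem:cost} with $\delta=10K\varepsilon_t$ gives $\mathbb P(E_{h,j}\mid C(0),\dots,C(h-1))\le 20K^2\varepsilon_t/\range_{P^h(\p)}(\p)$, so $E_{h,j}$ contributes at most $20mK^2\varepsilon_t$ to the expected per-step regret. If no $E_{h,j}$ occurs, then Algorithm~\ref{alg:partition} on $\p$, hence on each $\widetilde\q_t^X$, reaches a leaf, all players play exactly the top $m$ arms of $\p$, and the per-step regret is $0$. Summing over the $O(K)$ depths $h$ and $O(K)$ indices $j$ bounds the expected per-step regret by $O(mK^4\varepsilon_t)$.

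Finally, summing over time: since $\varepsilon_t=10000\sqrt{K^3\log(KT)/t}$ we have $\sum_{t\le T}\varepsilon_t=O(K^{3/2}\sqrt{T\log(KT)})$, and therefore
\[
\E[R_T]\le O(m)+O(mK^3\log(KT))+O(mK^4)\cdot O(K^{3/2}\sqrt{T\log(KT)})=O(mK^{11/2}\sqrt{T\log T}),
\]
which is the asserted bound; combined with the no-collision property from the first step, this also completes the proof of Theorem~\ref{thm:bandit}.

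The step I expect to be the main obstacle is the reduction in the first paragraph — certifying that the modified estimates $\widetilde\q_t^X$ genuinely behave like $\varepsilon_t$-accurate estimates, so that Lemmas~\ref{lem:guarantee} and~\ref{lem:cost} apply verbatim to $\p$ and its $\varepsilon_t$-perturbations. This is precisely what the bootstrap event $\Omega$ is engineered to deliver: $\Omega_2$ forces enough exploration of the still-relevant arms at the rescaled rate $\sim t/K^3$ (which is why $k_*=k_{t/(4K^2)}$), and the $100K^{3/2}$ factor in $\Omega_1$ converts this into the accuracy $\varepsilon_t$ needed downstream. Once that is granted, the per-step bookkeeping in the second paragraph is routine, resting only on $\range$ being non-increasing along root-to-leaf paths of $\mathcal T_{K,m}$ and on $A(P^h(\p))$ being certified optimal via item~\ref{it:guar1} of Lemma~\ref{lem:guarantee}.
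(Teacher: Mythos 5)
Your proposal is correct and follows essentially the same route as the paper: on the event $\Omega$ you invoke Lemma~\ref{lem:modify} to replace $\q_t^X$ by the $\varepsilon_t$-accurate modified estimates $\widetilde\q_t^X$ without changing the partition element, then rerun the full-information per-step argument (events $E_{h,j}$, Lemmas~\ref{lem:guarantee} and~\ref{lem:cost}) to get $O(mK^4\varepsilon_t)$ per step, and sum using $\varepsilon_t=10000\sqrt{K^3\log(KT)/t}$. The paper simply cites the proof of Theorem~\ref{thm:goodpartition} for the middle step, whereas you unpack it and also account explicitly for $\Omega^c$ and the initial exploration phase; these are harmless elaborations, not a different argument.
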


\begin{proof}
We assume $\Omega$ holds throughout. We then have, for all $i$:
\begin{equation}\label{eqn:qtilde_and_q}
\left| \widetilde q_t^X(i)-p_t(i) \right| \leq\frac{\varepsilon_{n_{t/(4K^2)}^X(i)}}{100K^{3/2}}\leq  \frac{\varepsilon_{t/(10K^3)}}{10K^{3/2}}< \frac{\varepsilon_t}{3}.
\end{equation}

By Lemma~\ref{lem:modify} we have $\mathcal P_{c,\varepsilon_t}(\q_t^X)=\mathcal P_{c,\varepsilon_t}(\widetilde \q_t^X)$. Therefore the regret of the strategy defined by $(\q_t^X)_{t\in [T],X\in [m]}$ is equal to the regret of the ``cheating strategy" obtained by playing using the estimates $\widetilde \q_t^X$ instead of $\q_t^X$. Now note that by Equation~\eqref{eqn:qtilde_and_q}, the ``cheating strategy" using $\widetilde \q_t^X$ satisfies Equation~\eqref{eqn_error_bound_fullinfo}. Therefore, as argued in the proof of Theorem~\ref{thm:goodpartition}, we obtained the regret bound
\[\mathbb{E}[R_T] \leq O \left( mK^4 \sum_{t=1}^T \varepsilon_t \right).\]
The theorem follows by using the definition of $\varepsilon_t$.
\end{proof}

\bibliographystyle{alpha}
\bibliography{newbib}
\end{document}